\title{Robustness of Community Detection to Random Geometric Perturbations}
\author{%
  Sandrine~P\'ech\'e \\ 
  LPSM, University Paris Diderot\\
  \texttt{peche@lpsm.fr}\\
\and
  Vianney~Perchet\\
  Crest, ENSAE \& Criteo AI Lab\\
 \texttt{vianney.perchet@normalesup.org} \\}
\newcommand{\R}{\ensuremath{\mathds{R}}}
\newcommand{\C}{\ensuremath{\mathbb{C}}}
\newcommand{\mbE}{\ensuremath{\mathds{E}}}
\newcommand{\bpm}{\begin{pmatrix}}
\newcommand{\epm}{\end{pmatrix}}
\newtheorem{theorem}{Theorem}
\newtheorem{lemma}[theorem]{Lemma}
\newtheorem{proposition}[theorem]{Proposition}
\newtheorem{conjecture}[theorem]{Conjecture}
\newcommand{\brem}{\begin{remark}}
\newcommand{\erem}{\end{remark}}
\newcommand{\bconj}{\begin{conjecture}}
\newcommand{\econj}{\end{conjecture}}
\newcommand{\bdefi}{\begin{definition}}
\newcommand{\edefi}{\end{definition}}
\newcommand{\bt}{\begin{theo}}
\newcommand{\bfa}{\begin{fact}}
\newcommand{\efa}{\end{fact}}
\newcommand{\becor}{\begin{coro}}
\newcommand{\ecor}{\end{coro}}
\newcommand{\et}{\end{theo}}
\newcommand{\bp}{\begin{proposition}}
\newcommand{\ep}{\end{proposition}}
\newcommand{\bl}{\begin{lemma}}
\newcommand{\el}{\end{lemma}}
\newcommand{\be}{\begin{equation}}
\newcommand{\ee}{\end{equation}}
\newcommand{\mbP}{\mathbb{P}}
\newcommand{\Tr}{\mathrm{Tr}}
\newcommand{\E}{\mathbb{E}}
  \newcommand\smallO{o}
\DeclareMathOperator{\sign}{sign}
\begin{document}

\maketitle

\begin{abstract}
We consider the stochastic block model where connection between vertices is perturbed by some latent (and unobserved) random geometric graph. The objective is to prove that spectral methods are robust to this type of noise, even if they are agnostic to the presence (or not) of the random graph. We provide explicit regimes where the  second eigenvector of the adjacency matrix is highly correlated to the true community vector (and therefore when weak/exact recovery is possible). This is possible thanks to a detailed analysis of the spectrum of the latent random graph, of its own interest.
\end{abstract}

\section*{Introduction}
In a  $d$-dimensional  random  geometric  graph, $N$ vertices are assigned random coordinates in  $\R^d$, and only points close enough to each other are connected by an edge. Random geometric graphs are used to model complex networks such as  social networks, the world wide web and so on. We refer to \cite{penrose2003random}  - and references therein - for a comprehensive introduction to random geometric graphs. On the other hand, in social networks, users are more likely to connect if they belong to some specific community (groups of friends, political party, etc.). This has motivated the introduction of the stochastic block models (see the recent survey \cite{abbe2017community} and the more recent breakthrough \cite{bordenave2015} for more details), where in the simplest case, each of the $N$ vertices belongs to one (and only one) of the two communities that are present in the network.

The two types of connections -- geometric graph vs.\ block model -- are conceptually quite different and co-exist independently. Two users might be connected because they are ``endogenously  similar'' (their latent coordinates are close enough to each others) or because they are ``exogenously similar'' (they belong to the same community). For instance, to oversimplify a social network, we can consider that two different types of connections can occur  between  users: either they are childhood friends  (with similar latent variables) or they have the same political views (right/left wing).

We therefore model these simultaneous  types of interaction in social networks as a simple stochastic block model (with 2 balanced communities) perturbed by a latent geometric graph.  More precisely, we are going to assume that the probability of endogenous connections between vertices $i$ and $j$, with respective latent variables $X_i$, $X_j \in \R^d$, is given by the Gaussian\footnote{We emphasize here that  geometric interactions are defined through some kernel so that different recovery regimes can be identified with respect to a unique, simple width parameter $\gamma$. Similarly,  the choice of the Gaussian kernel might seem a bit specific and arbitrary, but this  purely for the sake of presentation: our approach can be generalized to other kernels (the ``constants'' will be different; they are defined w.r.t.\ the kernel chosen).
} kernel $\exp(-\gamma \|X_i-X_j\|^2)$ where $\gamma$ is the (inverse) width. On the other hand, exogenous connections are defined by the block model where half of the $N$ vertices belong to some community, half of them to the other one.  The probability of connection between two members of the same community is equal to $p_1$ and between two members from different communities is equal to $p_2$. We also consider an extra parameter $\kappa \in [0,1]$ to represent the respective strengths of exogenous vs.\ endogenous connections (and we assume that $\kappa + \max\{p_1,p_2\} \leq 1$ for technical reason).

Overall, the probability of connection between $i$ and $j$, of latent variable $X_i$ and $X_j$ is 
$$\mathds{P}\big\{ i \sim j \, \big|\, X_i, X_j \big\} = \kappa e^{-\gamma \|X_i-X_j\|^2} +\begin{cases} p_1 &\text{ if $i,j$ are in the same community}\cr
p_2& \text{ otherwise.}\end{cases}$$

In  stochastic block models, the key idea is to recover  the two communities from the observed set of edges (and only from those observations, i.e., the latent variables $X_i$ are not observed). This recovery can have different variants that we enumerate now (from the strongest to the weakest). Let us denote by $\sigma \in \{\frac{\pm 1}{\sqrt{N}}\}^N$ the normalized community vector illustrating to which community each vertex belong ($\sigma_i = -\frac{1}{\sqrt{N}}$ if $i$ belongs the the first community and $\sigma_i = \frac{1}{\sqrt{N}}$ otherwise).

Given the graph-adjency matrix $A \in \{0,1\}^{N^2}$, the objective is to output a normalized vector $x \in \R^N$ (i.e., with $\|x\|=1$) such that, for some $\varepsilon >0$,

\medskip

\begin{tabular}{rl}
\textbf{Exact recovery:}&with probability tending to 1, $\big|\sigma^\top x\big|=1$, thus $x \in  \{\frac{\pm 1}{\sqrt{N}}\}^N$\\
\textbf{Weak recovery:}&with probability tending to 1,   $\big|\sigma^\top x\big|\geq \varepsilon$ and $x \in  \{\frac{\pm 1}{\sqrt{N}}\}^N$\\
\textbf{Soft recovery:}&with probability tending to 1,   $\big|\sigma^\top x\big|\geq \varepsilon$\end{tabular}
\medskip

%
We recall here that if $x$ is chosen at random, independently from $\sigma$, then $\big|\sigma^\top x\big|$ would be of the order of $\frac{1}{\sqrt{N}}$, thus tends to 0. On the other hand, weak recovery implies that the vector $x$ has (up to a change of sign)  at least $\frac{N}{2}(1+\varepsilon)$ coordinates equal to those of $\sigma$. Moreover, we speak of soft recovery (as opposed to hard recovery) in the third case by analogy to soft vs.\ hard classifiers. Indeed, given any normalized vector $x \in \R^d$, let us construct the vector $\sign(x)=\big(\frac{2\mathds{1}\{X_i\geq 0\}-1 }{\sqrt{N}}\big) \in \{\frac{\pm 1}{\sqrt{N}}\}^N$.  Then $\sign(x)$ is a candidate for weak/exact recovery. Standard comparisons between Hamming and Euclidian distance (see, e.g., \cite{lelarge2015reconstruction}) relates soft to weak recovery as 
$$\big| \sigma^\top \sign(x) \big| \geq 4 \big|\sigma^\top x\big| -3;$$ 
In particular, weak-recovery is ensured as soon as soft recovery is attained above the threshold of $\varepsilon=3/4$ (and obviously exact recovery after the threshold $1-1/4N$).

\medskip

For simplicity, we are going to assume\footnote{The fact that $d=2$ does not change much compared to $d>3$; it is merely for the sake of  computations; any Gaussian distribution $\mathcal{N}(0,\sigma^2I_2)$  can be recovered by dividing $\gamma$ by $\sigma^2$.} that $X_i$ are i.i.d., drawn from the 2-dimensional Gaussian distribution $\mathcal{N}(0,I_2)$. In particular, this implies that the law $A_{i,j}$ (equal to 1 if there is an edge between $i$ and $j$  and 0 otherwise) is a Bernoulli random variable (integrated over $X_i$ and $X_j$) $
\mathrm{Ber}\Big(\frac{p_1+p_2}{2}+ \frac{\kappa}{1+4\gamma}\Big);
$
Notice that $A_{i,j}$ and $A_{i',j'}$ are identically distributed but not independent if $i=i'$ or $j=j'$. Recovering communities can be done efficiently (in some regime) using spectral methods and we will generalize them to this perturbed  (or mis-specified) model. For this purpose, we will need a precise and detailed spectral analysis of the random geometric graphs considered (this has been initiated in \cite{rai2007spectrum}, \cite{dettmann2017spectral} and \cite{blackwell2007spectra} for instance).

\medskip

There has been several extensions of the standard stochastic block models to incorporate latent variables or covariables in perturbed stochastic block models. We can mention cases where  covariables are observed (and thus the algorithm can take their values into account to optimize the community recovery) \cite{yan2019covariate,weng2016community,deshpande2018contextual,huang2018pairwise}, when the degree of nodes are corrected \cite{gao2018community} or the case of labeled edges \cite{heimlicher2012community,xu2014edge,jog2015information,lelarge2015reconstruction,yun2016optimal}.  However, these papers do not focus on the very simple question of the robustness of recovery algorithm to (slight) mis-specifications in the model, i.e., to some small perturbations of the original model and this is precisely our original  motivations. Regarding this question, \cite{Massoulie} consider the robustness of spectral methods for a SBM perturbed by adversarial perturbation in the sparse degree setup.  Can we prove that a specific efficient algorithm (here, based on spectral methods) still exactly/weakly/softly recover communities even if it is agnostic to the presence, or not, of endogenous  noise  ? Of course, if that noise is too big, then recovery is impossible (consider for instance the case $\gamma =0$ and $\kappa\gg 0$). However, and this is our main contribution, we are able to pinpoint specific range of perturbations (i.e., values of $\kappa$ and $\gamma$) such that spectral methods -- in short, output the normalized second highest eigenvector -- still manage to perform some recovery of the communities. Our model is motivated to simplify the exposition but can be generalized to more complicated models (more than two communities of different sizes).

To be more precise, we will prove that:\\
- if $1/\gamma$ is in the same order than $p_1$ and $p_2$ (assuming that $p_1 \sim p_2$ is a  standard assumption in stochastic block model), then soft recovery is possible under a mild assumption ($\frac{p_1 -p_2}{2} \geq 4 \frac{\kappa}{\gamma}(1+\varepsilon)$); \\
- if $\gamma(p_1-p_2)$ goes to infinity, then exact recovery happens. \\
However, we mention here that we do not consider the ``sparse'' case (when $p_i \sim \frac{a}{n}$), in which  regimes where partial recovery is possible or not (and efficiently) are now clearly understood \cite{decelle2011asymptotic,massoulie2014community,deshpande2015asymptotic,mossel2015reconstruction}, as the geometric graphs perturbes too much the delicate arguments.

Our main results are summarised in Theorem \ref{TH:known} (when the different parameters are given) and Theorem \ref{Th:mainSBM} (without knowing them, the most interesting case). It is a first step for the study of the robustness of spectral methods in the presence of endogenous noise regarding the question of community detection.\\ 
As mentioned before, those results highly rely on a careful and detailed analysis of the spectrum of the random graph adjencency matrix. This is the purpose of the following Section \ref{SE:spectral}, which has its own interest in random graphs. Then we investigate the robustness of spectral methods in a perturbed stochastic block model, which is the main focus of the paper, in Section \ref{SE:SBM}. Finally, more detailed analysis, other statements and some proofs are given in the Appendix.
\vspace{-0.3cm}

\section{Spectral analysis for the adjacency  matrix of the random grah}\label{SE:spectral}
Let us denote by $P$ the \emph{conditional expectation matrix} (w.r.t\  the Gaussian kernel), where $P_{ij}=P_{ji}=e^{-\gamma ||X_i-X_j||^2}$, for $i<j \in [1, .., N],$
 and $P_{ii}=0$ for all  $i=1, ..,N$. We will denote by $\mu_1\geq \mu_2\geq \cdots \geq \mu_N$ its ordered eigenvalues (in Section \ref{SE:SBM}, $\mu_k$ are the eigenvalues of $\kappa P$).


\subsection{The case where $\gamma$ is bounded}


We study apart the case where $\limsup_{N\to \infty} \gamma <\infty$. 
The simplest case corresponds to the case where $\gamma  \log(N) \to 0$ as $N \to \infty$ as with probability one, each $P_{i,j}$ converges to one.  And as a consequence,   the spectrum  of $P$ has a nonzero eigenvalue which converges to $N$ (with probability arbitrarily close to $1$).  In the case where $\gamma$ is not negligible w.r.t. $\frac{1}{\log(N)}$,  arguments to understand the spectrum of $P$ -- or at least its spectral radius -- are a bit more involved.
\bp Assume that $\gamma(N)$ is a sequence such that $\lim_{N\to \infty} \gamma(N)=\gamma_0\geq 0$. Then there exists a constant $C_1(\gamma_0)$ such that the largest eigenvalue of $P$ satisfies 
$$\frac{\mu_1(P)}{N C_1(\gamma_0)} \to 1 \mathrm{\ as\ } N \to \infty.$$
\label{lemme:gammaconstant}
\ep
\vspace{-0.5cm}

\subsection{The spectral radius of $P$ when $\gamma \to \infty$, $\gamma \ll  N/\ln N$ }
We now investigate the special case where $\gamma \to \infty$, but when $\gamma \ll  N/\ln N$ (as in this regime  the spectral radius $\rho(P)$ of $P$ does not vanish). We  will show that $\rho(P)$ is in the order of $\frac{N}{2\gamma}$.

We formally state this case under the following Assumption \eqref{H0} (implying that  $\gamma \ln \gamma \ll N$).
 \be \label{H0} \tag{$H_1$} \: \gamma \to \infty \ \mathrm{\ and\ } \ \frac{1}{\gamma}\frac{N}{\ln N} \to \infty. \ee

\bp If Assumption (\ref{H0}) holds then, with probability tending to one,
$$  \frac{N}{2\gamma} \leq  \rho(P) \leq \frac{N}{2\gamma} \left (1+ \smallO(1)\right ).$$
\label{Prop1}
\ep
\vspace{-0.5cm}

\begin{proof}
By the Perron Frobenius theorem,
one has that 
$$\min_{i=1, \ldots, N} \sum_{l=1}^N P_{il} \leq \rho(P) \leq \max_{i=1, \ldots, N} \sum_{l=1}^N P_{il}.$$ To obtain an estimate of the spectral radius of $P$, we  show that, with probability tending to 1,   $\max_{i} \sum_{l=1}^N P_{il} $ cannot exceed $\frac{N}{2\gamma}$ and  for ``a large enough number" of indices $i$, their \emph{connectivity} satisfies 
$$ \sum_{l=1}^N P_{il} = \frac{N}{2\gamma}\left (1+ o(1)\right ).
$$
The proof is going to be decomposed into three parts (each corresponding to a different lemma, whose proofs are delayed to Appendix \ref{App:spectral}.).
\begin{enumerate}
\item We first consider only vertices close to 0, i.e., such that $|X_i|^2 \leq 2\frac{\log(\gamma)}{\gamma}$. For those vertices,  $\sum_j P_{i,j}$ is of the order of  $ N/2\gamma$ with probability close to 1. See Lemma \ref{lem: 1}
\item For the other vertices, farther away from 0, it is easier to only provide an upper bound on $\sum_j P_{i,j}$ with a similar proof.  See Lemma \ref{lemme:ub}
\item  Then we show that the spectral radius has to be of the order $N/2\gamma$  by considering  the subset $J$ of vertices  "close to 0" (actually introduced in the first step) and by proving that their inner connectivity -- restricted to $J$ --, must be of the order $N/2\gamma$. See Lemma \ref{lem: UB}.
\end{enumerate}
Combining the following three Lemmas \ref{lem: 1},  \ref{lemme:ub} and \ref{lem: UB}  will immediately give the result. 
\end{proof}

\bl \label{lem: 1} Assume that Assumption (\ref{H0}) holds, then, as $N$ grows to infinity,
$$
\mbP \Big\{ \exists i \leq N \mathrm{\ s.t.\ } |X_i|^2 \leq 2\frac{\ln \gamma}{\gamma},  \Big|\sum_{j=1}^N P_{ij} -\frac{N}{2\gamma}\Big| \leq \smallO\Big(\frac{N}{2\gamma}\Big)\Big\}\to 1.$$
\el 
\vspace{-0.4cm}

Lemma \ref{lem: 1} states that the connectivities of vertices close to the origin  converge to their expectation (conditionally to  $X_i$). Its proof  decomposes the set of vertices into those that are close to $i$ (the main contribution in the connectivity, with some concentration argument), far from $i$ but close to the origin (negligible numbers) and those far from $i$ and the origin (negligible contribution to the connectivity).

The second step of the proof of Proposition \ref{Prop1} considers  indices $i$  such that $|X_i|^2\geq 2\frac{\ln \gamma}{\gamma}$.
\bl For indices $i$ such that $|X_i|^2\geq 2\frac{\ln \gamma}{\gamma}$ one has with  probability tending to 1  that $$\sum_{j=1}^N P_{ij}\leq \frac{N}{2\gamma}\left (1+\smallO(1)\right).$$\label{lemme:ub}
\el
\vspace{-0.6cm}
The proof just uses the fact that for those vertices, $P_{ij}$ are typically negligible.

To get a lower bound on the spectral radius of $P$, we show that if one selects the submatrix $P_J:=(P_{ij})_{i,j \in J}$ where $J$ is the collection of indices 
\be \label{J}J=\Big \{1\leq i \leq N,  |X_i|^2 \leq 2\frac{\ln \gamma }{\gamma}\Big \},\ee
the spectral radius of $P_J$ is almost $ \frac{N}{2\gamma}$. This will give the desired estimate on the spectral radius of $P$. 

\bl Let $J$ be the subset defined in Equation \eqref{J} and $P_J$ the associated sub matrix. Let $\mu_1(J)$ denote the largest eigenvalue of $P_J$.
Then, with h.p., one has that 
$$\mu_1(J)\geq \frac{N}{2\gamma} (1-\smallO(1)).$$
\label{lem: UB}
\el 
\vspace{-0.5cm}
The proof relies on the fact that vertices close to the origin get the most contribution to their connectivity from the other vertices close to the origin.

\medskip

The constant $1/2$ that arises in the Proposition \ref{Prop1} is a direct consequence of the choice of the Gaussian kernel. Had we chosen a different kernel, this constant would have been different (once the width parameter $\gamma$ normalized appropriately). The techniques we developed can be used to compute it; this is merely a matter of computations, left as exercices.
\vspace{-0.2cm}
\section{A stochastic block model perturbed by a geometric graph}\label{SE:SBM}

\subsection{The model }We  consider in this section the stochastic block model, with two communities (it can easily be extended to the coexistence of more communities), yet perturbed by a geometric graph. More precisely, we assume that each member $i$ of the network (regardless of its community) is characterized by an i.i.d.\  Gaussian vector $X_i$ in $\R^2$ with distribution $\mathcal{N}(0,I_2)$.

The perturbed stochastic block model is characterized by four parameters: the two probabilities of intra-inter connection of  communities (denoted respectively by $p_1$ and $p_2 >0$) and two connectivity parameters $\kappa,\gamma$, chosen so that $\max (p_1, p_2)+\kappa \leq 1$:

-In the usual stochastic block model, vertices $i$ and $j$ are connected with probability $r_{i,j}$ where  
 $$ r_{ij}=\begin{cases}p_1&\mathrm{\ if\ }X_i, X_j \mathrm{\ belong\ to\ the\ same\ community\ }\cr
p_2& \mathrm{otherwise}\end{cases},$$
where $p_1$ and $p_2$ are in the same order  (the ratio $p_1/p_2$ is uniformly bounded).

-The geometric perturbation of the stochastic block model we consider is defined as follows. Conditionally on the values of  $X_i$,  the entries of the adjacency matrix $A=\left ( A_{ij}\right)$ are independent (up to symmetry) Bernoulli random variables   with parameter 
$q_{ij}=\kappa e^{-\gamma |X_i-X_j|^2}+ r_{ij}$.\\
 We remind that the motivation is independent to incorporate the fact that members from two different communities can actually be ``closer" in the latent space than members of the same community.\\
Thus in comparison with preceding model, the matrix $P$ of the geometric graph is now replaced with $Q:= \kappa P+ \begin{pmatrix} p_1J &p_2 J\cr p_2 J & p_1 J
\end{pmatrix},$ where we assume, without loss of generality, that $X_i, i\leq N/2$ (resp. $i \geq N/2+1$) belong to the same community.
The matrix $$P_0:=\begin{pmatrix} p_1J &p_2 J\cr p_2 J & p_1 J\end{pmatrix}$$ has two non zero eigenvalues which are 
$\lambda_1=N(p_1+p_2)/2$ with associated normalized eigenvector $v_1=\frac{1}{\sqrt N}(1, 1, \ldots 1)^\top$ and $\lambda_2=N(p_1-p_2)/2$ associated to $v_2=\sigma=\frac{1}{\sqrt N}(1,\ldots, 1, -1, \ldots -1)^\top$. Thus, in principle, communities can be detected from the eigenvectors of $P_0$ by using the fact that two vertices $i,j$ such that $v_2(i)v_2(j)=1$ belong to the same community. Our method can be generalized (using sign vectors) to more complicated models where the two communities are of different size, as well as to the case of $k$ communities (and thus the matrix $P_0$ has $k$ non zero eigenvalues).

\medskip

For the sake of notations, we write the adjacency matrix of the graph as :
$$A= P_0+  P_1 + A_c,$$
where $P_1=\kappa P$ with $P$ the $N\times N$-random symmetric matrix with entries $(P_{ij})$ -- studied in the previous section -- and $A_c$ is, conditionnally on the $X_i$'s a random matrix with independent Bernoulli entries which are centered. 

\subsection{Separation of eigenvalues: the easy case}
We are going to use spectral methods to identify communities. We therefore study in this section a regime where the eigenvalues of $A$ are well separated and the second eigenvector is approximately $v_2$, i.e. the vector which identifies precisely the two communities.

\bp \label{Prop:trivial} Assume that $$N (p_1-p_2)\gg  \sqrt N + \frac{N}{\gamma}.$$ Then, with probability tending to $1$, the two largest eigenvalues of $A$ denoted by $\rho_1\geq \rho_2$ are given by
$$\rho_i=\lambda_i(1+\smallO(1)), \: i=1,2.$$
Furthermore, with probability tending to 1, associated normalized eigenvectors (with non negative first coordinate) denoted by $w_1$ and $w_2$ satisfy  $\langle v_i, w_i \rangle=1-\smallO(1); \: i=1,2$.\ep

 Proposition \ref{Prop:trivial} implies that, in the  regime  considered, the spectral analysis of the adjacency matrix can be directly used to detect communities, in the same way it is a standard technique for the classical stochastic block model  (if $|p_1-p_2|$ is big enough compared to $p_1 + p_2$, which is the case here). 
 Finding the exact threshold $C_0$ such that if $N (p_1-p_2)= C_0( \sqrt N + \frac{N}{\gamma})$ then the conclusion of Proposition \ref{Prop:trivial} is still an open question. 

\subsection{Partial reconstruction when $\frac{N}{\gamma} \gg  \sqrt{N(p_1+p_2)}$}
From Theorem 2.7 in \cite{BenaychBordenaveKnowlesSR}, the spectral norm of $A_c$ cannot exceed $$\rho (A_c)\leq \left (\sqrt{ \kappa \frac{N}{\gamma}} +\sqrt{N(\frac{p_1+p_2}{2}+\mathcal{O}(\frac{\kappa}{2\gamma}))}\right )(1+\epsilon),$$
with probability tending to $1$, since  the maximal connectivity of a vertex does not exceed $N\big(\frac{p_1+p_2}{2} +  \frac{\kappa}{2\gamma}\big)(1+\smallO(1))$.  
In the specific regime where $$\frac{\kappa N}{2\gamma}\ll \sqrt{N \frac{p_1+p_2}{2}},$$  standard techniques \cite{bordenave2015} of communities detection would work, at the cost of  additional  perturbation arguments. As a consequence, we will concentrate on the reconstruction of communities  when $$\frac{\kappa N}{2\gamma}\gg \sqrt{N \frac{p_1+p_2}{2}}.$$ This essentially means that the spectrum of $A_c$ is blurred into that of $P_1.$
More precisely, we are from now going to consider the case where the noise induced by the latent random graph is of the same order of magnitude as the signal (which is the interesting regime):

\begin{equation}\tag{$H_2$}  \ \exists 0<c,C <1 \mathrm{\ s.t.\  } \lambda_2^{-1}\frac{\kappa N}{2\gamma} \in [c,C] , \frac{\lambda_2}{\lambda_1} \in [c,C]   \mathrm{\ and\ } \lambda_2 \gg \sqrt{\lambda_1} .\label{assH}
\end{equation}
If \eqref{assH} holds, then  the spectrum of $P_0+P_1$ overwhelms that of $A_c$. As a consequence, the problem becomes that of community detection based on  $P_0+P_1$, which will be done using spectral methods.

\medskip

To analyze the spectrum of $P_0+P_1$, we will use extensively the resolvent identity \cite{benaych2009eigenvalues}: consider $\theta \in \C\setminus \R$ and
set $S=P_0+P_1; R_S(\theta)=(S-\theta I)^{-1},$ $R_1(\theta):=(P_1-\theta I)^{-1}$. One then has that \be \label{eqresol}R_S(I+P_0 R_1)=R_1,\ee
where the variable $\theta$ is omitted for clarity when they are no possible confusion.
Since $P_0$ is a rank two matrix, then $P_0$ can be written as $P_0=\lambda_1v_1v_1^*+\lambda_2v_2v_2^*$ where $v_1$ and $v_2$ are the eigenvectors introduced before.

Eigenvalues of $S$ that are not eigenvalues of $P_1$ are roots of the rational equation $\det (I+P_0 R_1)=0$:
\begin{eqnarray}\label{EQ:Det1}
&\det(I+P_0 R_1)=&1+\lambda_1 \lambda_2\langle R_1v_1, v_1 \rangle \langle R_1 v_2, v_2 \rangle+\lambda_1\langle R_1v_1, v_1 \rangle \cr
&&+\lambda_2\langle R_1v_2, v_2 \rangle-\lambda_1\lambda_2\langle R_1v_1, v_2 \rangle^2.\end{eqnarray}
Let $\mu_1 \geq \mu_2 \geq \cdots \mu_N$ be the ordered eigenvalues of $P_1$ with associated normalized eigenvectors $w_1, w_2, \ldots, w_N$, then one has that $R_1(\theta)=\sum_{j=1}^N \frac{1}{\mu_j -\theta}w_jw_j^*.$
Denote, for  every  $j \in \{1, .., N\}$,  $r_j=\langle v_1, w_j\rangle$ and $s_j= \langle v_2, w_j\rangle$, so that Equation \eqref{EQ:Det1} rewrites into
\begin{align}\label{functg}
\notag\det(I+P_0 R_1(\theta))=:f_{\lambda_1,\lambda_2}(\theta)=&1+\sum_{j=1}^N \frac{1}{\mu_j-\theta} (\lambda_1 r_j^2+\lambda_2 s_j^2)\\
&+\lambda_1\lambda_2/2 \sum_{j\not=k}\frac{1}{(\mu_j -\theta)(\mu_k-\theta)}(r_j s_k-r_ks_j)^2.\end{align}
As mentioned before, we aim at using spectral methods to reconstruct communities based on the second eigenvector of $S$. As a consequence, these techniques may work only if (at least) two eigenvalues of $S$, that are roots of $\det(I+P_0 R_1(\theta))=0$ exit the support of  the spectrum of $P_1$, i.e., such that they are greater than $\mu_1$.

So we will examine  conditions under which there exist two real solutions to Equation~\eqref{functg},  with the restriction that they must be greater than  $\mu_1.$ If two such solutions exist,  by  considering the singularities in (\ref{eqresol}), then  two eigenvalues of $S$ indeed lie outside the spectrum of $P_1$.

\subsubsection{Separation of Eigenvalues in the rank two case.}
We now prove that two eigenvalues of $S$ exit the support of the spectrum of $P_1$. Recall the definition of the function $f_{\lambda_1,\lambda_2}$ given in Equation \eqref{functg} (or equivalently  Equation \eqref{EQ:Det1}).
One has that $\lim_{\theta \to \infty}f_{\lambda_1,\lambda_2}(\theta)=1$ , $f_{\lambda_1,\lambda_2}(\theta (\lambda_1))<0$ and similarly $f_{\lambda_1,\lambda_2}(\theta(\lambda_2))<0$, where $\theta(\cdot)$ is the function introduced in the rank 1 case. Thus two eigenvalues  exit the spectrum of $P_1$ if  $$\lim_{\theta \to \mu_1^+} f_{\lambda_1,\lambda_2}(\theta)>0.$$
\vspace{-0.2cm}
First, let us make the following claim (a consequence of   \eqref{H0} and  \eqref{assH}, see Lemma \ref{lemme:r_1}).

\begin{equation}\label{hypo2}\tag{$H_3$} \: \liminf_{N \to \infty} \lambda_1 r_1^2 >0.\end{equation}

\bl \label{lemm:sep}Assume \eqref{H0}, \eqref{assH} and \eqref{hypo2} hold and that there exists $\epsilon>0$ such that $$\lambda_2\geq 4 \mu_1(1+\epsilon)=4\kappa\frac{N}{2\gamma}(1+\epsilon).$$Then at least two eigenvalues of $P_0+P_1$  separate from the spectrum of $P_1$.\el 
\begin{proof}
Let us first assume that 
$$\mu_1 \mathrm{\ is\ isolated;\ there\ exists\ } \eta >0 \mathrm{\ such\ that\ for\ } N \mathrm{\ large\ enough\ } \mu_1>\mu_2+\eta.$$
In this case, we look at the leading terms in the expansion of $g$ as $\theta$ approaches $\mu_1$. 
It holds that
$$f_{\lambda_1,\lambda_2}(\theta)\sim \frac{1}{\theta-\mu_1} \left ( \lambda_1 \lambda_2 \sum_{j\geq 2}\frac{1}{\theta-\mu_j}(r_1s_j-r_js_1)^2-\lambda_1r_1^2-\lambda_2s_1^2\right ) .$$
Using that the spectral radius of $P_1$ does not exceed $\mu_1$, we deduce that 
\begin{align*}f_{\lambda_1,\lambda_2}(\theta)&\geq  \frac{1}{\theta-\mu_1}\left ( \frac{ \lambda_1\lambda_2}{2\theta}\sum_{j\geq 2}(r_1s_j-r_js_1)^2-\lambda_1r_1^2-\lambda_2s_1^2\right ) \cr
&\geq \frac{1}{\theta-\mu_1} \left ( \frac{\lambda_1 \lambda_2}{2 \theta} (r_1^2+s_1^2) -\lambda_1 r_1^2 -\lambda_2s_1^2\right)\geq \frac{1}{\theta-\mu_1}\lambda_1 (r_1^2+s_1^2)\epsilon,
\end{align*}
provided $\lambda_2\geq 2\mu_1(1+\epsilon).$ Note that if $\mu_1$ is isolated, the bound on $\lambda_2$ is improved by a factor of $2$.
\paragraph{}Now we examine the case where $\mu_1$ is not isolated. We then define
 $$I^*:=\{i : \: \limsup_{N \to \infty} \mu_i-\mu_1=0\},$$ and we define $\tilde v_i=\sum_{j \in I^*} \langle v_i, w_j \rangle w_j, $ $i=1,2.$
 Then mimicking the above computations, we get
 \begin{equation}\label{EQ:separation}f_{\lambda_1,\lambda_2}(\theta)\geq \frac{1+\smallO(1)}{\theta-\mu_1}\left ( \frac{\lambda_1 \lambda_2}{4\theta} (||\tilde v_1^2||+||\tilde v_2^2||)-\lambda_1 ||\tilde v_1^2||-
 \lambda_2||\tilde v_2^2||\right) \end{equation} so that  two eigenvalues separate from the rest of the spectrum as soon as $\lambda_2 >4\mu_1 (1+\epsilon)$. To get that statement we simply modify step by step the above arguments. This finishes the proof of Lemma \ref{lemm:sep} as soon as $\liminf_{N \to \infty} \lambda_1 r_1^2 >0$.
\end{proof}

The threshold exhibited for the critical value of $\lambda_2$  might not be the optimal one, however it is in the correct scale as we do not a priori expect a separation if $\lambda_2\leq \mu_1.$

\subsubsection{Partial reconstruction when $N\frac{p_1+p_2}{2}$ is known}
In the specific case where $N\frac{p_1+p_2}{2}$ is known beforehand for some reason, it is possible to weakly recover communities using Davis-Kahan $\sin(\theta)$-theorem under the same condition than Lemma  \ref{lemm:sep}.

We recall that this theorem states that if $M = \alpha xx^\top$ and $\widetilde{M} = \beta \widetilde{x} \widetilde{x}^\top$ is the best rank-1 approximation of $M'$, where both $x$ and $\widetilde{x}$ are normalized to $\|x\|=\|\widetilde{x}\|=1$, then
$$
\min\big\{ \|x - \widetilde{x}\|, \|x + \widetilde{x}\|\big\} \leq \frac{2\sqrt{2}}{\max\{|\alpha|,|\beta|\}}\|M-M'\|.
$$

\begin{theorem}\label{TH:known}Assume that \eqref{H0} and \eqref{assH} hold and that there exists $\epsilon>0$ such that 
$$\lambda_2\geq 4 \mu_1(1+\epsilon) \iff  \frac{p_1-p_2}{2}\geq \frac{2\kappa}{\gamma}(1+\epsilon),$$ then weak recovery of the communities is possible.
\end{theorem}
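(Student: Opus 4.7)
Because $\lambda_1=N(p_1+p_2)/2$ is known by hypothesis and $v_1=N^{-1/2}(1,\ldots,1)^\top$ is known a priori, we form the computable matrix
$$B := A-\lambda_1 v_1 v_1^\top = \lambda_2 v_2 v_2^\top + P_1 + A_c,$$
and propose as estimator $\sign(\widetilde v_2)$, where $\widetilde v_2$ is a unit top eigenvector of $B$. The plan is to apply the rank-one Davis--Kahan bound recalled just above the theorem with $M=\lambda_2 v_2 v_2^\top$, $M'=B$, and $\widetilde M=\rho\,\widetilde v_2\widetilde v_2^\top$ the best rank-one approximation of $B$, and then to convert the eigenvector proximity into weak recovery via the Hamming-versus-Euclidean inequality $|\sigma^\top\sign(x)|\ge 4|\sigma^\top x|-3$ quoted in the introduction.

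The main ingredient is to control the noise spectral norm $\|P_1+A_c\|$. Since $P$ is symmetric with non-negative entries, Proposition~\ref{Prop1} yields $\|P_1\|=\kappa\|P\|\le \mu_1(1+\smallO(1))$ with $\mu_1=\kappa N/(2\gamma)$. For $A_c$, the bound of \cite{BenaychBordenaveKnowlesSR} quoted before \eqref{assH} gives $\|A_c\|=O\bigl(\sqrt{\kappa N/\gamma}+\sqrt{N(p_1+p_2)/2}\bigr)$; both summands are $\smallO(\mu_1)$, the first because $\mu_1\to\infty$ under \eqref{H0} so $\sqrt{\mu_1}=\smallO(\mu_1)$, and the second because \eqref{assH} enforces $\lambda_2\gg\sqrt{\lambda_1}$ together with $\lambda_2$ of the same order as $\mu_1$. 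We conclude $\|P_1+A_c\|\le \mu_1(1+\smallO(1))$.

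Using $\max(\lambda_2,\rho)\ge\lambda_2$, Davis--Kahan delivers
$$\min\{\|v_2-\widetilde v_2\|,\|v_2+\widetilde v_2\|\}\;\le\;\frac{2\sqrt 2\,\|P_1+A_c\|}{\lambda_2}\;\le\;\frac{2\sqrt 2\,\mu_1}{\lambda_2}(1+\smallO(1))\;\le\;\frac{\sqrt 2}{2(1+\epsilon)}(1+\smallO(1)),$$
where the last inequality uses the hypothesis $\lambda_2\ge 4\mu_1(1+\epsilon)$. The polarisation identity $\min\{\|v_2\pm\widetilde v_2\|\}^2=2-2|\langle v_2,\widetilde v_2\rangle|$ then produces
$$|\sigma^\top\widetilde v_2|\;=\;|\langle v_2,\widetilde v_2\rangle|\;\ge\;1-\tfrac{1}{4(1+\epsilon)^2}(1+\smallO(1)),$$
which is strictly above $3/4$ for any fixed $\epsilon>0$ once $N$ is large enough. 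Plugging this into the soft-to-weak reduction gives $|\sigma^\top\sign(\widetilde v_2)|\ge 1-(1+\epsilon)^{-2}-\smallO(1)$, a strictly positive constant, i.e.\ weak recovery.

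The principal obstacle is the noise estimate in the second paragraph: one has to combine Proposition~\ref{Prop1}, the \cite{BenaychBordenaveKnowlesSR} bound on $A_c$, and both parts of \eqref{assH} carefully to ensure the centered Bernoulli fluctuation $A_c$ is genuinely dominated by the geometric spectrum $P_1$ and not merely of the same order. The rest is then a mechanical invocation of Davis--Kahan and the soft-to-weak inequality already established in the introduction.
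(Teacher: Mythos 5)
Your proposal is correct and follows essentially the same route as the paper: both apply the rank-one Davis--Kahan bound to $A-\lambda_2 v_1v_1^\top$... more precisely to $A-N\frac{p_1+p_2}{2}v_1v_1^\top$ versus $\lambda_2 v_2v_2^\top$, bound the noise by $\|P_1+A_c\|\le\mu_1(1+\smallO(1))$ using Proposition~\ref{Prop1} and the \cite{BenaychBordenaveKnowlesSR} estimate under \eqref{H0} and \eqref{assH}, and then convert eigenvector proximity into weak recovery (the paper via the Hamming--Euclidean comparison $\frac{1}{N}d_H(v_2,\sign(\widetilde{x}))\le\|v_2-\widetilde{x}\|^2<\frac12$, you via the equivalent soft-to-weak inequality from the introduction).
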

\begin{proof} We are going to appeal to Davis-Kahan theorem with respect to $$M= P_0-N\frac{p_1+p_2}{2} v_1v_1^\top= N\frac{p_1-p_2}{2}v_2v_2^\top$$ and $$M'=A - N\frac{p_1+p_2}{2} v_1v_1^\top = P_0+P_1+A_c- N\frac{p_1+p_2}{2} v_1v_1^\top = P_1+A_c+ M $$
As a consequence, let us denote by $\widetilde{x}$ the first eigenvector of $M'$ of norm 1 so that 
$$
\frac{1}{N}d_H(v_2,\sign(\widetilde{x})) \leq \|v_2 - \widetilde{x}\|^2 \leq \frac{8}{\lambda_2^2}\|P_1+A_c\|^2 = \frac{8}{\lambda_2^2}\mu_1^2(1+\smallO(1))\ .
$$
Weak reconstruction is possible if  the l.h.s.\ is strictly smaller than $1/2$, hence if $\lambda_2 \geq 4 \mu_1(1+\varepsilon)$.
\end{proof}

It is quite interesting that weak recovery is possible in the same regime where two eigenvalues of $P_0+P_1$ separate from the spectrum of $P_1$. Yet the above computations  imply that in order to compute $\widetilde{x}$, it is necessary to know $\frac{p_1+p_2}{2}$ (at least up to some negligible terms). In the standard stochastic block model, when $\kappa =0$, this quantity can be efficiently estimated since the $\frac{N(N-1)}{2}$ edges are independently drawn with overall probability $\frac{p_1+p_2}{2}$. As a consequence, the average number of edges is a good estimate of  $\frac{p_1+p_2}{2}$ up to its   standard deviation. The latter is indeed negligible compared to $\frac{p_1+p_2}{2}$ as it is in the order of   $\frac{1}{N}\sqrt{\frac{p_1+p_2}{2}}$.

On the other hand, when $\kappa \neq 0$, such trivial estimates are no longer available; indeed, we recall that the probability of having an edge between $X_i$ and $X_j$ is equal to $\frac{p_1+p_2}{2}+\frac{\kappa}{1+4\gamma}$, where all those terms are unknown (and moreover,  activations  of edges are no longer independent).  We study in the following section, the case where $p_1+p_2$ is not known. First, we will prove that  Assumption \eqref{hypo2} is actually always satisfied (notice that it was actually not required for weak recovery). In a second step, we will prove that \textsl{soft} recovery is possible, where we recall that this means we can output a vector $x\in \mathbb{R}^N$ such that $\|x\|=1$ and $x^\top v_2$ does not converge to 0. Moreover, we also prove that weak (and exact) recovery is possible if the different parameters $p_1$, $p_2$ and $\frac{1}{\gamma}$ are sufficiently separated.

\subsubsection{The case of unknown $p_1+p_2$}
We now proceed  to show that  Assumption \eqref{hypo2} holds in the regime considered.
\bl 
\label{lemme:r_1} Under \eqref{H0} and \eqref{assH}, one has that 1) for some  constant $C>0$, $\gamma r_1^2\geq C$. 
 and 2)  for some $\epsilon>0$ small enough, $\lambda_1 r_1^2\geq \epsilon.$\el 
The first point of Lemma \ref{lemme:r_1} implies \eqref{hypo2} with an explicit rate if $\gamma \leq A N^{\frac{1}{2}}$ for some constant $A$. The second point proves this result in the general case.





\begin{theorem}\label{Th:mainSBM}
If \eqref{H0} and \eqref{assH} hold true and $\lambda_1 > \lambda_2+2\frac{\kappa}{2\gamma}$ then 
 the correlation $|w_2^\top v_2|$ is uniformly bounded away from 0 hence soft recovery is always possible. Moreover, if the ratio  $\lambda_2/\mu_1$ goes to infinity, then $|w_2^\top v_2|$ tends to 1, which gives weak (and even exact at the limit) recovery.
\end{theorem}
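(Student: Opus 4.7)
The plan is to pass from the adjacency matrix $A$ to the noiseless matrix $S:=P_0+P_1$ and analyze the outlier eigenpair of $S$ through the resolvent identity \eqref{eqresol}. Under \eqref{assH} the bound on $\rho(A_c)$ stated at the beginning of Section~\ref{SE:SBM} gives $\|A_c\|=\smallO(\lambda_2)$, so provided we establish a spectral gap of order $\lambda_2$ around the second eigenvalue of $S$, a Davis--Kahan argument on the second spectral projector yields $|\langle v_2,w_2^{(A)}\rangle|=|\langle v_2,w_2^{(S)}\rangle|+\smallO(1)$. It therefore suffices to prove the correlation bound for the second eigenvector $w_2^{(S)}$ of $S$.

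Any outlier eigenvalue $\theta$ of $S$ (outside $\mathrm{spec}(P_1)$) has an eigenvector of the form $w=R_1(\theta)u$ with $u=\alpha v_1+\beta v_2$ in the rank-two range of $P_0$; plugging back into $(S-\theta I)w=0$ reduces the eigenvalue problem to the $2\times 2$ linear system
\begin{equation}\label{sys2x2}
\begin{pmatrix} 1+\lambda_1 A_{11}(\theta) & \lambda_1 A_{12}(\theta)\\[2pt] \lambda_2 A_{12}(\theta) & 1+\lambda_2 A_{22}(\theta)\end{pmatrix}\begin{pmatrix}\alpha\\ \beta\end{pmatrix}=0,
\end{equation}
whose determinant is exactly $f_{\lambda_1,\lambda_2}(\theta)$ from \eqref{functg}, with $A_{11}(\theta)=\sum_k r_k^2/(\mu_k-\theta)$, $A_{22}(\theta)=\sum_k s_k^2/(\mu_k-\theta)$ and $A_{12}(\theta)=\sum_k r_k s_k/(\mu_k-\theta)$. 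Reading the rows of \eqref{sys2x2} gives $\langle v_1,w\rangle=-\alpha/\lambda_1$ and $\langle v_2,w\rangle=-\beta/\lambda_2$, so the target correlation at the second outlier $\theta_2$ reduces to controlling the ratio $\alpha/\beta$ in the kernel of that matrix.

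The key input is the asymmetric behaviour of $v_1$ and $v_2$ with respect to $P_1$: Lemma~\ref{lemme:r_1} ensures $\lambda_1 r_1^2\gtrsim 1$, so $v_1$ carries macroscopic overlap with the Perron eigenvector of $P_1$, whereas the community vector $v_2$ has no spike in $\mathrm{spec}(P_1)$ because the Gaussian kernel defining $P$ is community-blind (hence $s_1=\smallO(1)$ and $v_2$ spreads over the bulk). Combined with Proposition~\ref{Prop1}, this yields for $\theta\gtrsim\mu_1$ the estimates $A_{22}(\theta)=-\theta^{-1}(1+\smallO(1))$, $A_{12}(\theta)=\smallO(\theta^{-1})$ and $A_{11}(\theta)=-\theta^{-1}(1+\smallO(1))+r_1^2/(\mu_1-\theta)$. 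Plugged into $f_{\lambda_1,\lambda_2}(\theta)=0$, these locate the two outliers at $\theta_1=\lambda_1(1+\smallO(1))$ and $\theta_2=\lambda_2(1+\smallO(1))$; the hypothesis $\lambda_1-\lambda_2>\kappa/\gamma$ is precisely what is needed for the $(1,1)$-entry $1+\lambda_1 A_{11}(\theta_2)\approx 1-\lambda_1/\lambda_2$ to stay bounded away from zero at $\theta=\theta_2$. The first row of \eqref{sys2x2} then forces $\alpha=-\lambda_1 A_{12}(\theta_2)\beta/(1+\lambda_1 A_{11}(\theta_2))=\smallO(\beta)$, and inserting this into
\[
|\langle v_2,w_2^{(S)}\rangle|^2\;=\;\frac{(\beta/\lambda_2)^2}{\alpha^2\|R_1v_1\|^2+2\alpha\beta\langle R_1v_1,R_1v_2\rangle+\beta^2\|R_1v_2\|^2}
\]
together with $\|R_1(\theta_2)v_2\|^2=\lambda_2^{-2}(1+\smallO(1))$ yields a uniform positive lower bound, which is the soft-recovery statement. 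In the regime $\lambda_2/\mu_1\to\infty$ the resolvent collapses to $R_1(\theta_2)=-\theta_2^{-1}(I+\smallO(1))$, all spike and off-diagonal corrections vanish, $\theta_2\to\lambda_2$, and $w_2^{(S)}\to v_2$; therefore $|\langle v_2,w_2\rangle|\to 1$, which gives weak recovery and then exact recovery after passing through $\sign(\cdot)$.

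The main obstacle is the asymptotic analysis of $A_{ij}(\theta_2)$ in the intermediate range $\theta_2\in(\mu_1,\lambda_1)$, in particular the delocalization estimate $s_1=\smallO(1)$ and the bulk identity $\sum_{k\geq 2}s_k^2/(\mu_k-\theta_2)=-\theta_2^{-1}(1+\smallO(1))$. These require spectral information about $P$ beyond the radius bound of Proposition~\ref{Prop1}: essentially, that only the Perron direction carries macroscopic overlap with $v_1$ and that the bulk eigenvalues $\mu_k$, $k\geq 2$, lie well below $\theta_2$. Both points are refinements of the localization arguments developed in Section~\ref{SE:spectral}.
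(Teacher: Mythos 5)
Your reduction to the $2\times 2$ secular system and the identities $\langle v_1,w\rangle=-\alpha/\lambda_1$, $\langle v_2,w\rangle=-\beta/\lambda_2$ coincide with the paper's starting point. But the proof then hinges on normalizing $w=R_1(\theta_2)(\alpha v_1+\beta v_2)$, and this is where your argument has a genuine gap. You evaluate $\|w\|^2$ by asserting $s_1=\smallO(1)$, $A_{22}(\theta)=-\theta^{-1}(1+\smallO(1))$, $A_{12}(\theta)=\smallO(\theta^{-1})$ and $\|R_1(\theta_2)v_2\|^2=\lambda_2^{-2}(1+\smallO(1))$. None of these is established in the paper, and you acknowledge as much in your closing paragraph; they are not mere ``refinements'' of Section~\ref{SE:spectral}. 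The last estimate requires $v_2$ to have negligible overlap with \emph{every} eigenvector of $P_1$ whose eigenvalue is comparable to $\theta_2$, not just the Perron one. Proposition~\ref{Prop2} shows precisely that $P$ has a whole cloud of eigenvalues of order $N/2\gamma$ (the top eigenvalue does not macroscopically separate from the rest), and under \eqref{assH} one has $\mu_1\asymp\lambda_2\asymp\theta_2$, so in the soft-recovery regime the bulk is \emph{not} well below $\theta_2$ and $\sum_k s_k^2/(\theta_2-\mu_k)^2$ cannot be collapsed to $\theta_2^{-2}$ without a delocalization theorem for the eigenvectors of the random kernel matrix — a statement of a different order of difficulty that nobody proves here. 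For the same reason $\theta_2=\lambda_2(1+\smallO(1))$ is only correct when $\mu_1/\lambda_2\to 0$; in general the paper only locates $\theta_2$ between $\lambda_2/4$ and $\lambda_2+\mu_1$.

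The paper circumvents exactly this obstacle with an identity you do not use: differentiating the secular function gives
$$\|w\|^2=\alpha_2^2\,\frac{\partial_\theta f_{\lambda_1,\lambda_2}(\theta_2)}{\lambda_2\bigl(1+\lambda_1 a_1(\theta_2)\bigr)},\qquad\text{hence}\qquad \frac{\langle w,v_2\rangle^2}{\|w\|^2}=\frac{1}{\partial_\theta f_{\lambda_1,\lambda_2}(\theta_2)}\cdot\frac{1+\lambda_1 a_1(\theta_2)}{\lambda_2}.$$
This converts the norm of $w$ into the derivative of $f_{\lambda_1,\lambda_2}$, which is then bounded using only the crude resolvent estimates $|a_i(\theta)|\leq(\theta-\mu_1)^{-1}$ and $a_1',a_2',b'\leq(\theta-\mu_1)^{-2}$, i.e., using only the spectral radius from Proposition~\ref{Prop1} and no information whatsoever on the eigenvectors or bulk of $P_1$. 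The numerator $-(1+\lambda_1a_1(\theta_2))\geq\lambda_1/(\lambda_2+2\mu_1)-1$ is where the hypothesis $\lambda_1>\lambda_2+2\mu_1$ enters, matching your observation about the $(1,1)$ entry. To repair your proof you would either need to import this derivative identity, or actually prove the delocalization and bulk-resolvent estimates you assume — the latter being an open-ended project, not a routine adaptation of Lemmas~\ref{lem: 1}--\ref{lem: UB}.
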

An (asymptotic) formula for the level of correlation is provided at the end of the proof.

 \section{Experiments}\label{SE:expe}
 
 \begin{figure}[t]
\begin{center}
\noindent \includegraphics[scale=0.35]{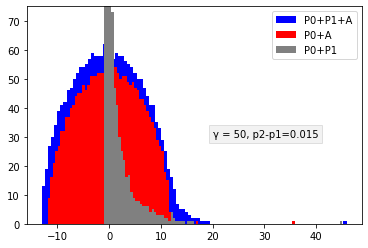} \includegraphics[scale=0.35]{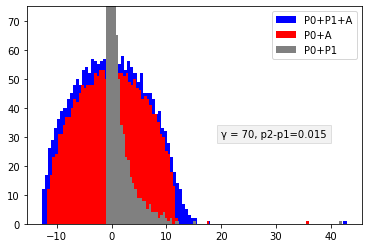}

\noindent \includegraphics[scale=0.35]{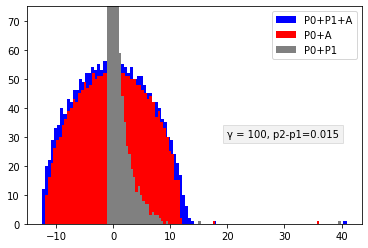} \includegraphics[scale=0.35]{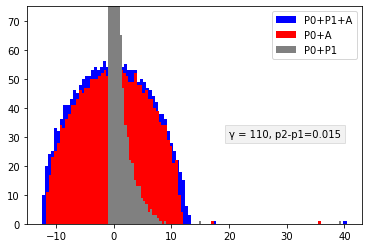}
\end{center}
\caption{The spectrum of the different block models for different values of $\gamma$.}
\label{Fig:Separation}
\end{figure}

The different results provided are theoretical and we proved that two eigenvalues separate from the bulk of the spectrum if the different parameters are big enough and sufficiently far from each other. And if they are too close to each other, it is also quite clear that spectral methods will not work. However, we highlight these statements in  Figure \ref{Fig:Separation}. It illustrates the effect of perturbation on the spectrum of the stochastic block models for the following specific values: $N=2000$, $p_1 = 2.5\%$, $p_2=1\%$, $\kappa =0.97$ and $\gamma \in \{50, 70, 100, 110\}$. Notice that for those specific values with get $\lambda_1=35$, $\lambda_2 = 15$ and $\mu_1 \in \{20,14.3, 10, 9.1\}$; in particular, two eigenvalues are well separated in the unperturbed stochastic block model.

The spectrum of the classical stochastic block model is coloured in red while the spectrum of the
perturbed one is in blue ( the spectrum of the conditionnal adjacency matrix, given the $X_i$'s is in gray). 
As expected, for the value of $\gamma=50$, the highest eigenvalue of $P_1$ is bigger than $\lambda_2$ and the spectrum of the expected adjacency matrix (in red) as some "tail". This prevents the separation of eigenvalues in the perturbed stochastic block model. Separation of eigenvalues starts to happen, empirically and for those range of parameters, around $\gamma=70$ for which $ \sqrt{\lambda_1} \leq \mu_1 =10 \leq \lambda_2$. \begin{figure}
\centering 
\includegraphics[scale=0.5]{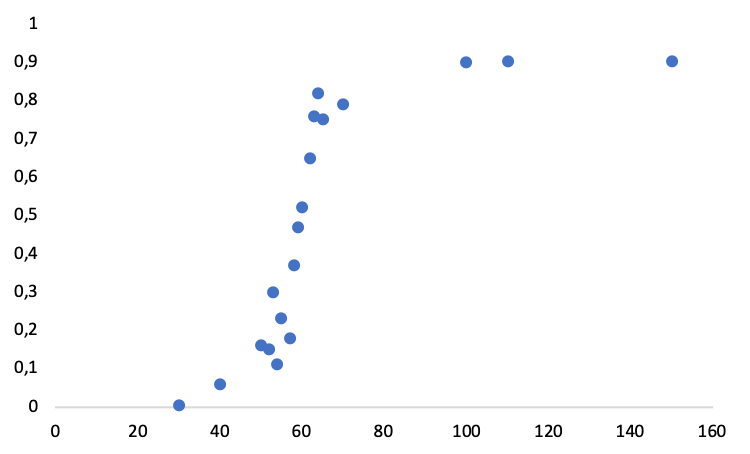}
\caption{The correlation between the second highest eigenvector and the community vector goes from 0 to 0.9 around the critical value $\gamma =60$.} \label{FIG: Corel}
\end{figure}

We also provide how the correlations between the second highest eigenvector and $\sigma$, the normalized vector indicating to which community vertices belong, evolve with respect to $\gamma$ for this choice of parameters, see Figure \ref{FIG: Corel}.

\section*{Conclusion}
The method exposed hereabove can be generalized easily. In the case where there are $k\geq 2$ communities of different sizes, $P_0$ has rank $k$. If $k$ eigenvalues of $S$ exit the support of the spectrum of $P_1$, then communities may be reconstructed using a set of $k$ associated (sign) eigenvectors, whether the parameters are known or not.

We have proved that spectral methods to recover communities are robust to slight mis-specifications of the model, i.e., the presence of endogenous noise not assumed by the model (especially when $p_1+p_2$ is not known in advance). Our results hold in the regime where $\frac{1}{\gamma} \gg \frac{\log N}{N}$ and with 2 communities (balancedness and the small dimension of latent variables were just assumed for the sake of computations) - those theoretical results are validated empirically by some simulations provided in the Appendix. 
Obtaining the same robustness results for more than 2 communities, for different types of perturbations and especially in the sparse regime $\frac{1}{\gamma}\sim p_i \sim \frac{1}{N}$ seems quite challenging as standard spectral techniques in this regime involve the non-backtracking matrix \cite{bordenave2015}, and its concentration properties are quite challenging to establish.

\section*{Acknowledgment} 
This research was supported by the Institut Universitaire de France. It was also supported in part by a public grant as part of the Investissement d'avenir project, reference ANR-11-LABX-0056-LMH, LabEx LMH, in a joint call with Gaspard Monge Program for optimization, operations research and their interactions with data sciences and by the French Agence Nationale de la Recherche under the grant number ANR19-CE23-0026-04.

\bibliographystyle{plain}
\bibliography{Graphs.bib}

\newpage

\begin{appendix}
\section{Additional  illustrating experiments}\label{App:simul}

In this section, we provide additional experiments that were run in the same conditions as those in Section \ref{SE:expe} on the difference that $p_2$ was set to the value $p_2=1.5\%$ (so that $p_2-p_1$ is quite small). Results are plotted on Figure \ref{Fig:Separation2} and, as expected, the second eigenvalue does not separate from the bulk. 

On the contrary, if we set $p_2 =4\%$ so that $p_2-p_1$ is large, then the second eigenvalue does separate from the bulk, even for small values of $\gamma$, see Figure \ref{Fig:Separation3}.

 \begin{figure}[h]
\begin{center}
\noindent \includegraphics[scale=0.3]{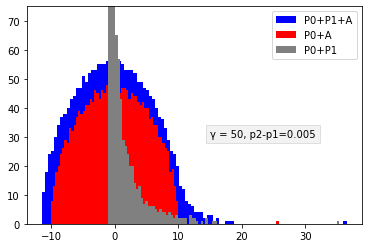} \includegraphics[scale=0.3]{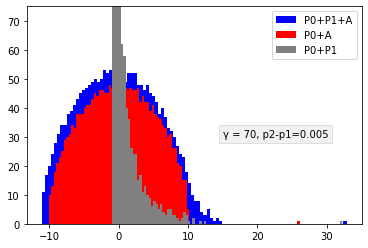}

\noindent \includegraphics[scale=0.3]{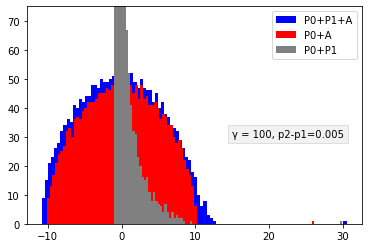} \includegraphics[scale=0.3]{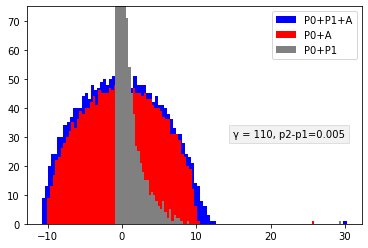}
\end{center}
\caption{The spectrum of the different block models for different values of $\gamma$ when $p_2-p_1 = 0.5\%$ hence the second eigenvalue does not separate from the bulk even for large values of $\gamma$.}
\label{Fig:Separation2}
\end{figure}

 \begin{figure}[h!]
\begin{center}
\noindent \includegraphics[scale=0.3]{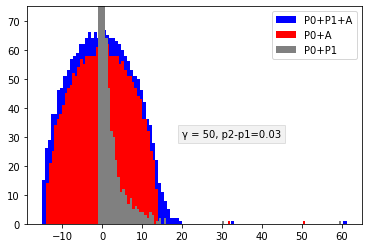} \includegraphics[scale=0.3]{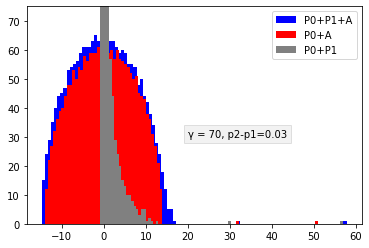}

\noindent \includegraphics[scale=0.3]{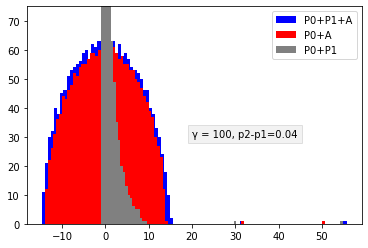} \includegraphics[scale=0.3]{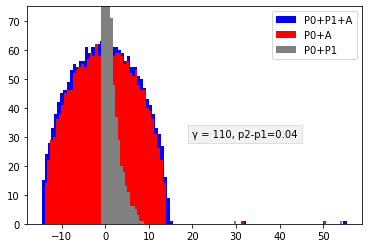}
\end{center}
\caption{The spectrum of the different block models for different values of $\gamma$ when $p_2-p_1 = 0.5\%$. The second eigenvalue separates from the bulk even for small values of $\gamma$}
\label{Fig:Separation3}
\end{figure}

\section{Additional results  and technical proofs of Section \ref{SE:spectral}}\label{App:spectral}
In this section, we gather additional results on the random graphs $P$, namely when it is connected (i.e., without isolated vertices) and whether it is possible to prove that some eigenvalues separate from the spectrum or not.

Then we will proceed to prove technical statements made in Section \ref{SE:spectral}.

\subsection{The connectivity regime}
Let us first consider a preliminary remark on the connectivity of the random graph. This result is for illustration purpose, as the connectivity (or not) of the geometric graphs would have no real impact on our main result, so we do not put too much emphasis on the exact threshold of connectivity.  On the other hand,  the result of Lemma \ref{Lemma:Connect} is rather intuitive as with very high probability, one of the $\|X_i\|^2$ are going to be in the order of $2\log(N)$, which indicate that the transition between connectivity or not should indeed be around $\log(N)/\log\log(N)$.

\begin{lemma} \label{Lemma:Connect} Assume that $ \frac{\log(N)}{\gamma \log\log N}\to \infty$ as $N\to \infty$. Then one has that 
$$\mbP (\exists \mathrm{\ an\ isolated\ vertex\ } i, 1\leq i\leq N ) \to 0 \mathrm{\ as\ } N\to \infty.$$ \label{prop:iso}
\end{lemma}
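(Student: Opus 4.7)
The plan is a first-moment argument: letting $N_{\mathrm{iso}}$ be the number of isolated vertices, by Markov it suffices to show $\E[N_{\mathrm{iso}}]\to 0$. Conditionally on the positions $X_1,\ldots,X_N$ the probability that vertex $1$ is isolated is $\prod_{j\ge 2}(1-e^{-\gamma\|X_1-X_j\|^2})$. Since the $X_j$ for $j\ge 2$ are i.i.d.\ and independent of $X_1$, integrating them out first yields
\[
\PP(X_1 \text{ isolated})=\E_{X_1}\!\bigl[(1-q(X_1))^{N-1}\bigr],\qquad q(x):=\E_{Y\sim\mathcal N(0,I_2)}\!\bigl[e^{-\gamma\|x-Y\|^2}\bigr].
\]
Completing the square exactly as in the derivation of $\E[A_{ij}]=1/(1+4\gamma)$ gives the explicit expression $q(x)=(1+2\gamma)^{-1}\exp\bigl(-\tfrac{\gamma}{1+2\gamma}\|x\|^2\bigr)$.

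Next I would use $(1-q)^{N-1}\le e^{-(N-1)q}$ and switch to polar coordinates via $u=r^2/2$, which yields
\[
\E[N_{\mathrm{iso}}]\le N\int_0^\infty e^{-u}\exp\!\bigl(-\beta\,e^{-\alpha u}\bigr)du,\qquad \alpha:=\tfrac{2\gamma}{1+2\gamma},\ \beta:=\tfrac{N-1}{1+2\gamma}.
\]
The substitution $v=\beta e^{-\alpha u}$ converts the right-hand side into a lower-incomplete gamma integral:
\[
\E[N_{\mathrm{iso}}]\le \frac{N}{\alpha\,\beta^{1/\alpha}}\int_0^\beta v^{1/\alpha-1}e^{-v}\,dv\;\le\;\frac{N\,\Gamma(1/\alpha)}{\alpha\,\beta^{1/\alpha}}.
\]
Since $1/\alpha=1+1/(2\gamma)$, $\Gamma(1/\alpha)=O(1)$, and $N/(\alpha\beta)\sim (1+2\gamma)^2/(2\gamma)\le C\gamma$, while $\beta^{1/\alpha}=\beta\cdot\beta^{1/(2\gamma)}$, this simplifies to
\[
\E[N_{\mathrm{iso}}]\le C\,\gamma\cdot\Bigl(\tfrac{N}{2\gamma}\Bigr)^{-1/(2\gamma)}.
\]

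It remains to check that this bound tends to $0$ under the hypothesis. Taking logarithms, the bound goes to $-\infty$ iff $\log(N/(2\gamma))/(2\gamma)\gg\log\gamma$, equivalently $\gamma\log\gamma\ll\log N$. The assumption $\log N/(\gamma\log\log N)\to\infty$ forces $\gamma\le\log N/\log\log N$ for $N$ large, hence $\log\gamma\le\log\log N$, and therefore $\gamma\log\gamma\le\gamma\log\log N\ll\log N$, as required. (For $\gamma$ bounded the conclusion is even easier since $(N/\gamma)^{-1/(2\gamma)}$ then decays polynomially in $N$.)

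The main obstacle is the sharp evaluation of the integral: bounding it using only vertices close to the origin --- where $q(x)\sim 1/(2\gamma)$ and isolation is exponentially unlikely --- is far too optimistic. The delicate contribution comes from the roughly $\gamma$ vertices near the edge of the Gaussian cloud at radius $\sim\sqrt{2\log(N/\gamma)}$, whose expected degree is of order one and whose isolation probability is therefore non-trivial; the incomplete-gamma substitution is precisely what balances the vanishing Gaussian tail weight against this non-vanishing isolation probability, yielding the correct threshold $\gamma\log\gamma\ll\log N$.
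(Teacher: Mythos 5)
Your proof is correct and follows essentially the same route as the paper: a first-moment bound over the $N$ vertices combined with the exact Gaussian integration $\E_Y\big[e^{-\gamma\|x-Y\|^2}\big]=(1+2\gamma)^{-1}e^{-\gamma\|x\|^2/(1+2\gamma)}$, reducing everything to a bound on $\E\big[(1-q(X_1))^{N-1}\big]$. The only difference is the last step: the paper truncates at $\|X_1\|\leq A$ and optimizes $A$ (via $Ne^{-A^2/2}=1/\log N$), whereas you evaluate the radial integral exactly through the incomplete-gamma substitution; both give the threshold $\gamma\log\gamma\ll\log N$, which the stated hypothesis guarantees.
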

\begin{proof}Fix a vertex $i$. Conditionally on the $X_j$'s, the probability that $i$ is isolated is 
$$\prod_{j\not= i}(1-e^{-\gamma |X_i-X_j|^2}),$$
which we will integrate w.r.t.\ the distribution of independent $X_j$'s, $j \not=i$. Precisely, we get that the probability that there is an isolated vertex is upper-bounded by
\begin{align*}
\mbE \sum_i \prod_{j\not= i} (1-e^{-\gamma |X_i-X_j|^2}) &= N\mbE \Big(1- \frac{1}{1+2\gamma}e^{-\frac{2\gamma}{1+2\gamma}\frac{|X_i|^2}{2}}\Big)^{N-1}\\
&\leq N\Big(1- \frac{1}{1+2\gamma}e^{-\frac{2\gamma}{1+2\gamma}\frac{A^2}{2}}\Big)^{N-1} + Ne^{\frac{-A^2}{2}}
\end{align*}
for every $A>0$. In particular, the choice of $ne^{\frac{-A^2}{2}}=1/\log(N)$ gives that the probability of having an isolated vertex is smaller than
$$
N\exp\Big(-\frac{N-1}{N\log(N)}\frac{1}{1+2\gamma}(N\log(N))^{\frac{1}{1+2\gamma}}\Big)+\frac{1}{\log(N)}
$$

So as soon as $ \frac{\log(N)}{\gamma \log\log N}\to \infty$, the probability of having one isolated vertex goes to 0.\end{proof}

\subsection{Separation of eigenvalues}

\paragraph{}
We now examine the possibility that some eigenvalues of $P$ separate from the rest of the spectrum, as it could interfere with standard spectral methods used in community detection. For that purpose, we are going to study the moments of the spectral measure of $P$.

\bp \label{Prop2} Let $l\geq 2$ be a given integer, then the following holds:
\begin{align*}\lim_{N\to \infty} \displaystyle{\frac{1}{\gamma} \mbE \Tr \left (\frac{2\gamma P}{N}\right )^l }&= \frac{1}{l^2}\cr
\mathrm{Var}\displaystyle{\frac{1}{\gamma} \Tr \left (\frac{\gamma P}{N}\right )^l} &=\mathcal{O}\left (\frac{1}{N} \right )
\end{align*}
\ep 

 Proposition \ref{Prop2} implies in particular that the non-normalized spectral measure $$\mu(P)=\sum_{i=1}^N \delta_{\mu_i}$$ has asymptotically some positive mass on large values in the order of $\frac{N}{\gamma}$. This does not prevent that the largest eigenvalue separates from the others but it does not hold that the largest eigenvalue computed in Proposition \ref{Prop1} overwhelms the remaining eigenvalues.

Proposition \ref{Prop2} roughly states that  the largest eigenvalue does not macroscopically separate from the rest of the spectrum. Instead it is blurred into a cloud of large eigenvalues and thus cannot be distinguished. Notice that this phenomenon is rather different from the standard stochastic block model for which there exists a regime (in the average degree of the graph) where a finite number of eigenvalues really overwhelm the rest of the spectrum. 

\medskip

\begin{proof}
We use the fact that the $X_i$'s are Gaussian random variables to give an explicit formula for the moments of the spectral measure $\mu(P)$.
Let us use the standard method to derive its moments: let $l>1$ be given. One has that
\begin{eqnarray}
&& \E \sum_{i=1}^N \mu_i^l =\E \Tr P^l= \sum_{i_1, i_2, \ldots, i_l} \E \prod_{j=1}^l P_{i_j i_{j+1}} \label{eqmom},
\end{eqnarray}
using the convention that $i_{l+1}=i_1$.
Note that there may be some coincidences among the vertices $i_1, i_2, \ldots, i_l$ chosen in $\{1, \ldots, N\}.$ We forget for a while the precise labels of these vertices and denote them by $w_1,w_2, \ldots, w_l$ instead (keeping track of the coincidences however).

For each possible choice of the set of coincidences in (\ref{eqmom}), we denote by $k\geq 1$ the number of pairwise distinct indices (that we again label $w_1, w_2, \ldots w_k$).  We associate a graph $G_k$ on the vertices $\{w_1, w_2, \ldots w_k\}$ by simply drawing the edges $(w_j, w_{j+1}), j=1, \ldots, l$. Note that the graph may have multiple edges. It has no loops because $P_{ii}=0, $ for any  vertex $i$. 
Let $C_l$ denote the simple cycle with vertices $1, 2, \ldots, l$ in order. Then this graph corresponds to the case where there is no coincidence.  
When there are some coincidences, some vertices from $C_l$ are pairwise identified (excluding the possibility that subsequent vertices along the cycle are identified due to the fact that loops are not allowed).
For $k<l$ we denote by $\mathbf{G_k}$ the set of such graphs obtained by pairwise identifications of vertices from $C_l$ (excluding subsequent vertices). Note that
$\mathbf{G_l}=\{C_l\}$. 

Then one has that 
\begin{equation}
 \E \sum_{i=1}^N \mu_i^l = \sum_{k=2}^l \sum_{G_k \in \mathbf{G_k}} N (N-1)\cdots (N-k+1) \E \prod_{e \in G_k} P_e,\label{eq21}
\end{equation}
where in the above formula we have chosen the set of actual vertices among $\{1, \ldots, N\}$ and each edge $e\in G_k$ is repeated with its multiplicity in the product.
By standard Gaussian integration, using that $P_{(ij)}=\exp\{ -\gamma ||X_i-X_j ||^2\}$, one can easily check that 

\be \E \prod_{e \in G_k} P_e =\left ( \det (I +2\gamma L_{G_k})\right )^{-1},\label{eq22}
\ee

where $L_{G_k}$ is the Laplacian of $G_k$: we recall that the Laplacian of a graph $G=(V,E)$, $V=\{1, \ldots , k\}$ is the $k\times k$ matrix whose entries are
$$L_{ii}=-\mathrm{deg}(i), i=1,2, \ldots, k; L_{ij}= m_{ij}, i<j,$$
where $m_{ij}$ is the multiplicity of the non oriented edge $(i,j)$.

We now perform the expansion of $\det \left ( I +2\gamma L_{G_k}\right )$ according to the powers of $\gamma$.
By the matrix tree theorem (see \cite{chaiken1978matrix} e.g.), one has that 
\be \label{eq23}\det \left (  I +2\gamma L_{G_k}\right )=(2\gamma)^{k-1} k \times \sharp \{\mathrm{spanning\ trees\ of\ } G_k\}+ \sum_{i=2}^k (2\gamma)^{k-i} a_{k,i},\ee
for some coefficients $a_{k,i}$ which can be easily deduced from some minors of $L_{G_k}$.
Combining now equations (\ref{eq21}), (\ref{eq22}), (\ref{eq23}), and using that $C_l$ has $l$ spanning trees, we deduce that 

\begin{eqnarray}&\E \sum_{i=1}^N \mu_i^l &= N^l(1+\smallO(1)) \frac{1}{(2\gamma)^{l-1} l^2 (1+\smallO(\gamma^{-1})}\cr
&& + \sum_{k=2}^{l-1} N^k(1+\smallO(k^2/N)) \frac{1}{(2\gamma)^{k-1} c_k(1+\smallO(\gamma^{-1})}\cr
&&= \frac{N^l}{(2\gamma)^{l-1} l^2} \left (1+\mathcal{O}(\gamma^{-1})+ \mathcal{O}\left (\frac{\gamma}{N}\right )\right).\label{defck}
\end{eqnarray}
In the second line of (\ref{defck}), the constant $c_k$ is given by $$c_k^{-1}=\sum_{G_k \in \mathbf{G_k}}\frac{1}{k \sharp \{\mathrm{spanning\ trees\ of\ } G_k\}}.$$
Thus we have proved the first statement of Proposition \ref{Prop2}.

Let us now turn to the variance :
$$\mathrm{Var} (\Tr P^l)=\E \left ( \Tr P^l \Tr P^l\right )-\left (\E \Tr P^l\right )^2.$$

We again developp the product 
$$ \Tr P^l \Tr P^l=\sum_{i_1, i_2, \ldots, i_l}  \prod_{k=1}^l P_{i_j i_{j+1}} \sum_{i_1', i_2', \ldots, i_l'}  \prod_{k=1}^l P_{i_j' i_{j+1}'}$$
and draw the associated graphs (forgetting the labels) on possibly $2l$ vertices.
If the two graphs are disconnected (this means that the two sets  $\{i_1, i_2, \ldots, i_l\}$ and $\{i_1', i_2', \ldots, i_l'\}$ are disjoint, then the expectation of the product splits by independance. The combined contribution of each subgraph to the variance will thus be in the order of $l^2/N$ times $\left (\E \Tr P^l\right )^2$. This comes from the fact that one has to choose $2k$ pairwise distinct indices when combining the two graphs (while twice $k$ pairwise distinct indices when considering the squared expectation of the Trace). Thus, by definition of the variance, the only graphs which are contributing to the variance are those for which at least one vertex from $\{i_1, i_2, \ldots, i_l\}$ and $\{i_1', i_2', \ldots, i_l'\}$ coincide. This means that using the same procedure as above, one can restrict to the set of graphs $G_k, k\leq 2l-1$ which are obtained from $C_{2l}$ by at least one identification.

From the above it is not difficult to check that $\mathrm{Var} \frac{1}{\gamma}\Tr \left (\frac{\gamma P}{N}\right )^l =\mathcal{O}\left (\frac{1}{N} \right ).$
This finishes the proof of Proposition \ref{Prop2}.\end{proof}

\subsection{Proof of Proposition \ref{lemme:gammaconstant}}

We first show that there exists a constant $C_1$ such that $$\frac{\mu_1(P)}{N C_1(\gamma_0)}\geq 1$$ for $N$ large enough.
For $i=1, \ldots, N$ we set $d(i):=\sum_j P_{ij},$ which we call "the connectivity" of $i$. By the Perron Frobenius theorem the largest eigenvalue of $P$ cannot exceed the maximal connectivity of a vertex, (which can be proved to be strictly greater than $\frac{N}{1+4\gamma}$). However the number of vertices whose connectivity is such high is negligible with respect to $N$ (it is not obvious such a number grows to infinity actually). 
Because all the entries of $P$ are positive, one knows that the largest eigenvalue of $P$ is simple and is equal to the spectral radius of $P$. Furthermore, one has that $$\mu_1(P)= \lim_{l\to \infty} \frac{ \langle v_1,P^l v_1\rangle}{ \langle v_1,P^{l-1} v_1\rangle}  $$ where $\sqrt{N}v_1=\tilde v_1=(1, 1, \ldots, 1)^t$.
Actually we are going to show that $$\mu_1(P)^2=(1+\smallO(1))  \frac{ \langle v_1,P^{2l+2} v_1\rangle}{ \langle v_1,P^{2l} v_1\rangle} \mathrm{\ for\ } l=\ln N. $$ 

First one has that $$\mu_1(P)^2\geq  \frac{ \langle v_1,P^{2l+2} v_1\rangle}{ \langle v_1,P^{2l} v_1\rangle} \mathrm{
 for\ } l=\ln N. $$ 
Now we show some concentration estimates for both the numerator and denominator, for $l\sim \ln N$ showing that to the leading order they concentrate around their mean which is enough to show that $$\mu_1\geq C_1(\gamma) N(1+\smallO(1)).$$
Observe that $\langle \tilde v_1, P^l \tilde v_1 \rangle= \sum_{i,j,i_1, \ldots, i_{l-1}} P_{ii_1}P_{i_1i_2}P_{i_{l-1}j}$ is a sum of at most $N^{l+1}$ terms. Each of the summands if a function of the Gaussian vector $X=(X_1, X_2, \ldots, X_N)^t.$ We are going to show that $X\mapsto \sum_{i,j,i_1, \ldots, i_{l-1}} P_{ii_1}P_{i_1i_2}P_{i_{l-1}j}$ is Lipschitz with Lipschitz constant in the order of $N^{(2l+1)/2}$ for some constant $C$ large enough. 
As $\mathbb{E}\sum_{i,j,i_1, \ldots, i_{l-1}} P_{ii_1}P_{i_1i_2}P_{i_{l-1}j}=(N C_2(\gamma_0))^{l+1}(1+\smallO(1))$ for some constant $C_2 (\gamma_0) >0$, this will be enough to ensure using standard concentration arguments for Gaussian vectors that
$$\mathbb{P}\left ( |\sum_{i,j,i_1, \ldots, i_{l-1}} P_{ii_1}P_{i_1i_2}P_{i_{l-1}j}-(C_2(\gamma_0)N)^{l+1}|\geq A N ^{(2l+1)/2}\right )\leq 2e^{-2A^2}.$$
Thus this implies that a.s. $$\lim_{N \to \infty}\frac{\sum_{i,j,i_1, \ldots, i_{l-1}} P_{ii_1}P_{i_1i_2}P_{i_{l-1}j}}{(C_2(\gamma_0)N)^{l+1}}=1.$$
Consider two vectors $X$ and $Y$. One has that 
\begin{eqnarray}
&&\Big |\sum_{i,j,i_1, \ldots, i_{l-1}} P_{ii_1}P_{i_1i_2}P_{i_{l-1}j}(X)-\sum_{i,j,i_1, \ldots, i_{l-1}} P_{ii_1}P_{i_1i_2}P_{i_{l-1}j}(Y)\Big |\cr
&&\leq \sum_{k=0}^{l-1}\sum_{i,j,i_1, \ldots, i_{l-1}} P_{ii_1}(X)P_{i_1i_2}(X)\Big |P_{i_ki_{k+1}}(X)-P_{i_ki_{k+1}}(Y)\Big |P_{i_{k+1}i_{k+2}}(Y) \ldots P_{i_{l-1}j}(Y)\cr
&&\leq \alpha \sum_{k=0}^{l-1}\sum_{i,j,i_1, \ldots, i_{l-1}} \prod_{l=0}^{k-1}P_{i_li_{l+1}}(X)\prod_{l=k+1}^{l-1}P_{i_li_{l+1}}(Y)\Big | |X_{i_k}-X_{i_{k+1}}|-|Y_{i_{k}}-Y_{i_{k+1}}|\Big |, \cr &&\label{fin}
\end{eqnarray}
where in the last line we have used the fact that $x\mapsto e^{-\gamma x^2}$ is $\alpha$-Lipschitz. 
The constant $\alpha$ can be chosen as $\alpha = 4 \sqrt \gamma \sup_x |xe^{-x^2}|.$
Consider the sum in (\ref{fin}). We note $\sum_*$ the sum over indices $i,j,i_1, \ldots, i_{l-1}$ and $k$ in the following. One has that 
\begin{align*} &\sum_{*} \prod_{l=0}^{k-1}P_{i_li_{l+1}}(X)\prod_{l=k+1}^{l-1}P_{i_li_{l+1}}(Y)\Big | |X_{i_k}-X_{i_{k+1}}|-|Y_{i_{k}}-Y_{i_{k+1}}|\Big |\cr
&\leq  \sqrt{ \sum_{*}\prod_{l=0}^{k-1}P_{i_li_{l+1}}^2(X)\prod_{l=k+1}^{l-1}P_{i_li_{l+1}}^2(Y)}\sqrt{\sum_{*}\Big | 
|X_{i_k}-X_{i_{k+1}}|-|Y_{i_{k}}-Y_{i_{k+1}}|\Big |^2}\cr
&\leq  N^{\frac{l+1}{2}} N^{\frac{l-1}{2}}\left ( \sum_k 8 |X-X_kv_1 -(Y-Y_k v_1)|^2\right)^{\frac{1}{2}}\cr
&\leq CN^{(2l+1)/2} ||X-Y||.
\end{align*}

We now show that $$\mu_1(P)^2\leq (1+\smallO(1)) \frac{ \langle v_1,P^{2l+2} v_1\rangle}{ \langle v_1,P^{2l} v_1\rangle} \mathrm{\ for\ } l=\ln N. $$ Denote by $w_i, i=1,\ldots, N$ a set of orthonormalized eigenvectors of $P$.
Equivalently the above means that $$\sum_{i>1}\mu_i^{2l}(\mu_1^2-\mu_i^2) \langle w_i, v_1\rangle^2=\smallO(1)\sum_{i\geq 1}\mu_i^{2l+2} \langle w_i, v_1\rangle^2.$$ 
Fix $\epsilon >0$. Set $r^2:= \sum_{i: \mu_1-|\mu_i|<\epsilon }\langle w_i, v_1\rangle^2.$ The first sum in the above then does not exceed:
$$2\epsilon r^2 \mu_1^{2l+1}+\mu_1^{2l+2}(1-r^2)(1-\epsilon)^{2l}.$$ This is $\smallO(1) \mu_1^{2l+2}r^2$ provided that $r^2 \geq \eta$ for some $\eta>0$. This is the fact we prove below. To that aim we show that $\langle w_1, v_1 \rangle^2 \geq \eta.$ Using that $w_1$ (associated to $\mu_1$) has non negative coordinates and is normalized to $1,$ one has that  $\langle w_1, v_1\rangle \geq \frac{1}{\sqrt N |w_1|_{\infty}}.$ Thus it is enough to show that $\limsup \sqrt N |w_1|_{\infty} <\infty.$   Assume this is not the case : then there exists a sequence $A_N\to \infty$ such that $\sqrt N |w_1|_{\infty}\geq A_N$ (along some subsequence). In particular let $w_{i_0}=\max w_i \geq \frac{A_N}{\sqrt N}.$
Fix $\delta>0$ small. Set $J:=\{j, w_j \geq \delta w_{i_0}\}.$ Then one has that $\sharp J \leq \frac{N}{\delta^2 A_N^2} \ll N$.
Using this in the expression 
$$\mu_1=\sum_{j \in J}P_{i_0j} \frac{w_j}{w_{i_0}}+\sum_{j \notin J}P_{i_0j} \frac{w_j}{w_{i_0}} $$ one deduces that 
$$\mu_1 \leq N \delta +\sharp J,$$ which is a contradiction.
This finishes the proof of Proposition \ref{lemme:gammaconstant}.

\subsection{Proof of Lemma \ref{lem: 1}}
Let us first introduce  some notations and key results for the proof. The function $$\theta: r\geq 0 \mapsto \theta(i,r):=\int_{D(X_i, \sqrt r )} \frac{1}{2\pi} e^{-|x|^2/2}d\lambda_2(x),$$
where $D(X_i, \sqrt r)$ is the disk centered at $X_i$ of radius $\sqrt r$.

Notice that the following holds for all $r >0$
$$
e^{-\frac{\|X_i\|^2}{2}}\Big(1-e^{-\frac{r}{2}}\Big)e^{-2\|X_i\|\sqrt{r}} \leq \theta(i,r) \leq e^{-\frac{\|X_i\|^2}{2}}\Big(1-e^{-\frac{r}{2}}\Big)e^{2\|X_i\|\sqrt{r}}.
$$
It also holds that
$$
2e^{-\|X_i\|^2}(1-e^{-r})\leq \theta(i,r) \leq e^{-\frac{\|X_i\|^2}{4}}(e^{\frac{r}{2}}-1)\
$$

and moreover if $r_1 > r_0$ then we immediately have
$$\theta(i,r_1)-\theta(i,r_0)\leq \frac{r_1-r_0}{2}. $$

Conditionally on $X_i$, the number of vectors among the $X_j'$s whose distance to $X_i$ falls in the interval $I$ is a binomial random variable $\mathrm{Bin}(N-1, \theta (i,l(I))).$ So we recall  the following basic concentration argument (see equivalently Theorem 2.6.2 in \cite{vershynin2018high}). Let $Z$ be a binomial random variable with distribution $\mathrm{Bin}(m,p)$. There exists a constant $\alpha>0$ ( if $p<4/5$, one can choose $\alpha=1/32$) such that for any $C>0$, one has 
$$\mbP\left  (|Z-mp| \geq C \sqrt{m p}\right) \leq 2e^{-\alpha C^2}.$$  

\medskip

We can now turn to the proof of Lemma \ref{lem: 1} itself. Let $\varepsilon >0$ be fixed (its specific value is tuned at the end of the proof) and  $i \in [N]$ be a fixed index such that $|X_i|^2\leq \frac{2\ln \gamma}{\gamma}$. We are going to show that 
$$S:= \sum_{j=1}^N e^{-\gamma |X_i-X_j|^2}= c_0 \frac{N}{\gamma } \left (1\pm \smallO(1) \right),$$
where 
$$ c_0:= \lim_{N \to \infty}\frac{ \gamma}{N}\sum_{k=1}^{2\frac{\ln \gamma}{\varepsilon}}\mathbf{n_k^{(i)}}e^{-k\varepsilon},
$$
with $\forall k=1 , \ldots, 2\frac{\ln \gamma}{\varepsilon}$,
$$ \mathbf{n_k^{(i)}}:= N\Big(\theta  \big(i, \frac{(k+1)\varepsilon}{\gamma } \big)- \theta  \big(i,\frac{k\varepsilon}{\gamma }\big)\Big).$$
As $\gamma$ goes to infinity with $N$, then it holds that
$$
N e^{-\frac{\|X_i\|^2}{2}}\Big(\frac{\varepsilon}{2\gamma}-\mathcal{O}(\frac{\ln^2\gamma}{\gamma^2})\Big)\leq\mathbf{n_k^{(i)}} \leq N\frac{\varepsilon}{2\gamma}
$$
so that if $\|X_i\|^2 \leq 2 \frac{\ln \gamma}{\gamma}$ then $\mathbf{n_k^{(i)}} \simeq \frac{N\varepsilon}{2\gamma }$ which ensures that  $c_0=\frac{1}{2}(1+\smallO(1))$ is well-defined.

To control $S$, we split this sum into three parts, depending on the distances from $X_j$ to $X_i$, as follows 

 \begin{align*} S &= \underbrace{\sum_{j: d^2(X_i,X_j)\in [\frac{\varepsilon}{\gamma}, \frac{2\ln \gamma}{\gamma}]} e^{-\gamma |X_i-X_j|^2}}_{S_1} +  \underbrace{\sum_{j: d^2(X_i,X_j) < \frac{\varepsilon}{\gamma}} e^{-\gamma |X_i-X_j|^2}}_{S_2}\\
 &\hspace{1.5cm} +\underbrace{\sum_{j: d^2(X_i,X_j) >   \frac{2\ln \gamma}{\gamma}} e^{-\gamma |X_i-X_j|^2}}_{S_3} \end{align*}

We first focus on $S_1$ that we are  going to further decompose as a function of the distance from $X_j$ to $X_i$ : define for $k \in \{1,\ldots, 2\frac{\ln\gamma}{\varepsilon}\}$
$$n_k^{(i)}:=\sharp \Big \{l, d^2(X_l, X_i)\in \left [\frac{k\varepsilon }{\gamma},\frac{(k+1)\varepsilon}{\gamma}\right  [\Big \}\ .$$

Then one has 
\begin{align*}S_1 &\leq  \sum_{k=1}^{2\frac{\ln\gamma}{\varepsilon}} e^{- k\varepsilon}n_k^{(i)}\cr
&= \sum_{k=1}^{2\frac{\ln\gamma}{\varepsilon}} e^{- k\varepsilon} \mathbf{n_k^{(i)}} +\sum_{k=1}^{2\frac{\ln\gamma}{\varepsilon}}   e^{- k\varepsilon}\left (n_k^{(i)}-\mathbf{n_k^{(i)}}\right ) \cr
&= \frac{N}{2\gamma }\big(1+\smallO(1)\big) +\sum_{k=1}^{2\frac{\ln\gamma}{\varepsilon}}  e^{- k\varepsilon}(n_k^{(i)}-\mathbf{n_k^{(i)}})  ,
\end{align*}
where the last equality comes from the approximation of $\mathbf{n_k^{(i)}}$ as $N$ and $\gamma$ goes to infinity.  It also holds that 
\begin{align*}S_1 &\geq  \sum_{k=1}^{2\frac{\ln\gamma}{\varepsilon}} e^{- (k+1)\varepsilon}n_k^{(i)}\cr
&= \sum_{k=1}^{2\frac{\ln\gamma}{\varepsilon}} e^{- (k+1)\varepsilon} \mathbf{n_k^{(i)}} +\sum_{k=1}^{2\frac{\ln\gamma}{\varepsilon}}   e^{- (k+1)\varepsilon}\left (n_k^{(i)}-\mathbf{n_k^{(i)}}\right ) \cr
&\geq \frac{N}{2\gamma }\big(1- 2\varepsilon-\smallO(1)\big) +\sum_{k=1}^{2\frac{\ln\gamma}{\varepsilon}}  e^{- {(k+1)}\varepsilon}(n_k^{(i)}-\mathbf{n_k^{(i)}}).
\end{align*}

It remains to control the different errors $n_k^{(i)}-\mathbf{n_k^{(i)}}$. It holds that,
$$\mbP_{X_i} \left (\exists 1\leq k\leq \frac{2\ln \gamma}{\varepsilon}, \:  |n_k^{(i)}-\mathbf{n_k^{(i)}}|\geq \varepsilon \mathbf{n_k^{(i)}} \right)\leq 8\frac{\ln(\gamma)}{\varepsilon} e^{-\alpha \varepsilon^2 \frac{N}{4\gamma} },$$
because each $\mathbf{n_k^{(i)}} \simeq \frac{N\varepsilon}{2\gamma}$ as $\gamma$ increase to infinity with $N$. At the end, we obtained that for each $X_i$ such that $\|X_i\|^2 \leq \frac{2\log(\gamma)}{\gamma}$, then 
\be \left |S_1 -  \frac{N}{2\gamma}\right | \leq \frac{N}{2\gamma}\big(3\varepsilon + \smallO(1)\big) \quad \mathrm{\ with\ proba\ at\ least\ } 1- 8\frac{\ln(\gamma)}{\varepsilon} e^{-\alpha \varepsilon^3  \frac{N}{4\gamma}}\ .\label{a}\ee

\medskip

Let us  now focus on $S_2$ which is obviously smaller than $n_0^{(i)}$ where
$$n_0^{(i)}:=\sharp \{j, d^2(X_i, X_j)<\frac{\varepsilon}{\gamma}\}.$$
Moreover, because of the concentration of binomials, it holds that
$$\mbP_{X_i} \left (n_0^{(i)}\geq 2N\theta(i,\frac{\varepsilon}{\gamma })\right)\leq 2e^{-\alpha N\theta(i,\frac{\varepsilon}{\gamma }) }.
$$
Now as $\gamma$ goes to infinity with $N$, then for $\gamma$ large enough, the following holds $$\frac{\varepsilon}{4\gamma }\leq \theta(i,\frac{\varepsilon}{\gamma })\leq \frac{\varepsilon}{2\gamma }$$
which ensures that 
$$ \mbP_{X_i} \left (n_0^{(i)}\geq \frac{N\varepsilon}{\gamma}\right)\leq 2e^{-\alpha \frac{N\varepsilon}{4\gamma } }.$$
As a consequence we have shown that
\begin{equation}\label{b}S_2 \leq \frac{N\varepsilon}{\gamma } \quad \mathrm{\ with\ probability\ at\ least\ } 1-2e^{-\frac{\alpha}{4} \frac{N\varepsilon}{\gamma }} \ .\end{equation}
Last, by the very definition of $S_3$, it always holds that 
\be\label{c} S_3\leq Ne^{-\ln \gamma^2}\leq \frac{N}{\gamma^2} .\ee

Combining (\ref{a}), (\ref{b}) and (\ref{c}), we obtain that with probability at most  $$2e^{-\frac{\alpha}{4} \frac{N\varepsilon}{\gamma}}+8\frac{\ln(\gamma)}{\varepsilon} e^{-\alpha \varepsilon^3  \frac{N}{4\gamma }}$$
one has that
$$
\Big|S - \frac{N}{2\gamma}\Big| \leq \frac{N}{2\gamma}\big( 5\varepsilon + \smallO(1)\big) .
$$
As a consequence, as $N$ grows to infinity, one has
\begin{align*}
&\mbP \left ( \exists i: |X_i|^2 \leq\frac{2 \ln \gamma}{ \gamma} \mathrm{\ and\ } \Big|\sum_{j=1}^Ne^{-\gamma |X_i-X_j|^2} -\frac{N}{2\gamma}\Big|\geq \frac{N}{2\gamma}\big(5\varepsilon+\smallO(1)\big)  \right)\\
& \leq 4N\frac{\ln \gamma}{\gamma}\Big(e^{-\frac{\alpha}{4} \frac{N\varepsilon}{\gamma }}+4\frac{\ln(\gamma)}{\varepsilon} e^{-\alpha \varepsilon^3  \frac{N}{4\gamma }} \Big)\to 0
\end{align*}
by choosing $\varepsilon = \Big(\frac{N}{\gamma \ln \gamma }\Big)^{-1/4}$ (so  that $\varepsilon$ goes to 0 as intended) and because $\frac{N}{\gamma \ln \gamma}$ goes to infinity. This proves Lemma \ref{lem: 1}.

\subsection{Proof of Lemma  \ref{lemme:ub}}
The proof is almost identical to that of Lemma \ref{lem: 1}. The only difference is that we cannot approximate  $\mathbf{n_k^{(i)}}$ by $\frac{N\varepsilon}{2\gamma}$ because $e^{-\frac{\|X_i\|^2}{2}}$ might go to 0. Yet it still holds that   $\mathbf{n_k^{(i)}} \leq \frac{N\varepsilon}{2\gamma}$.
And thus, we can easily prove the weaker statement
\begin{align*}
&\mbP \left ( \exists i, \sum_{j=1}^Ne^{-\gamma |X_i-X_j|^2} \geq \frac{N}{2\gamma}\big(1+5\varepsilon+\smallO(1)\big)  \right)\\
& \leq 4N\Big(e^{-\frac{\alpha}{4} \frac{N\varepsilon}{\gamma }}+4\frac{\ln(\gamma)}{\varepsilon} e^{-\alpha \varepsilon^3  \frac{N}{4\gamma }} \Big)\to 0
\end{align*}
with the same choice of $\varepsilon$, assuming Assumption  (\ref{H0}) holds.
%

\subsection{Proof of Lemma \ref{lem: UB}}
If we  can show that for any $ i \in J$ and with probability close to 1, it holds that \be \label{reste}\sum_{j\notin J} P_{ij}\ll \frac{N}{\gamma} ,\ee
then the result would be a direct consequence of Lemma \ref{lem: 1}. 

 By the very definition of $J$, if $j\notin J$, then necessarily 
$|X_j|^2 \geq \frac{2\ln \gamma}{\gamma}.$ Notice that  $|X_j|^2 \geq (3+\epsilon) \frac{\ln \gamma}{\gamma}$ then $\gamma |X_i-X_j|^2\geq (1+\epsilon)\ln \gamma$
so that for any $i\in J,$ this immediately yields that 
 $$\sum_{j, |X_j|^2 \geq (3+\epsilon) \frac{\ln \gamma}{\gamma}}P_{ij}\leq \frac{N}{\gamma^{1+\epsilon}}\ll \frac{N}{\gamma}.$$ 
 This is enough to obtain (\ref{reste}) for the contribution of such indices. Note also that the same argument is valid to get (\ref{reste}) for the subsum
(keeping $i\in J$ fixed) $$\sum_{j,\gamma |X_i-X_j|^2\geq (1+\epsilon)\ln \gamma} P_{ij}\leq\frac{N}{\gamma^{1+\epsilon}}\ll \frac{N}{\gamma}.$$

Thus we only need to consider indices $i\in J$ and $j \notin J$ such that $\gamma |X_i-X_j|^2\leq (1+\epsilon)\ln \gamma$. This implies in particular that necessarily  
$\|X_j\|^2 \leq 8\frac{\ln(\gamma)}{\gamma}$. 
Consider therefore such an index $i$ and let 
$$S:= \sum_{j: \gamma |X_j-X_i|^2\leq (1+\epsilon)\ln \gamma, |X_j|^2\geq \frac{2\ln \gamma }{\gamma}} e^{-\gamma |X_i-X_j|^2}.$$
Because $\|X_j\|^2 \leq 8 \frac{\ln \gamma}{\gamma}$ then the number of  indices $j \not \in J$ is smaller than than $16 N \frac{\ln \gamma}{\gamma}$ with probability at least $1-e^{-\alpha 8N \frac{\ln \gamma}{\gamma}}$. As a consequence, the sum above is composed of at most $16 N \frac{\ln \gamma}{\gamma}$ terms, all smaller than 1. Obviously, if they are all smaller than $\frac{1}{\ln^2 \gamma}$ then $S \leq 16 \frac{N}{\gamma\ln \gamma }\ll  \frac{N}{\gamma} $.

So this implies that  it only remains to control the sum $S$ for indices $i \in J$ such that for some $j \not \in J$ it holds that $\|X_i-X_j\|^2 \leq \frac{4\ln\ln\gamma}{\gamma}$. This implies that such indices $i \in J$ must satisfy   $$ 2 \frac{\ln \gamma}{\gamma}\geq |X_i|^2\geq 2\frac{\ln \gamma}{\gamma}\left (1-2\sqrt{2\frac{\ln \ln \gamma}{\ln \gamma}}\right ) .$$
And, using the same argument as before, there are at most $8\frac{N}{\gamma} \sqrt{\ln \gamma \ln\ln \gamma}$  such indices with arbitrarily high probability (as $\gamma$ goes to infinity). On the other hand, $\sharp J$ (the cardinality of $J$)  is, with arbitrarily high probability, in the order of $N\frac{\ln \gamma}{\gamma}$

This gives a lower bound on the spectral radius of $P_J$: let $v$ be the unit vector $v=\frac{1}{\sqrt{ \sharp J}}(1, \ldots, 1)^t$ (of dimension $\sharp J$). Then \begin{align*}\langle P_J v, v \rangle &\geq  \frac{\sharp J-8\frac{N}{\gamma}\sqrt{\ln\gamma \ln\ln\gamma}}{\sharp J}\frac{N}{2\gamma}(1-\smallO(1))\\
&\geq \frac{N}{2\gamma}(1-\smallO(1))\left (1-8\sqrt{\frac{\ln\ln\gamma}{\ln\gamma}}\right )\\
& \geq \frac{N}{2\gamma}(1-\smallO(1))\ .
\end{align*}
Hence Lemma \ref{lem: UB} is proved.

\section{Technical proofs of Section \ref{SE:SBM}}\label{App:SBM}

\subsection{Proof of Proposition \ref{Prop:trivial}}
The preceding proof can be easily modified to obtain the following bounds on the spectral radii : there exist constants $c_0=1/2, C>0$ so that with high probability 
$$\rho(P_1)\leq c_0 \frac{N}{\gamma}; \: \rho (A_c)\leq C \sqrt{N}.$$
Following \cite{furedi1981eigenvalues}, we first prove that the largest eigenvalue of $A$ is up to a negligible error (in the appropriate regime of $p_1, p_2, \gamma$) that of $P_0$. More precisely, it holds with arbitrarily high probability that $$\langle Av_1, v_1 \rangle =N\frac{p_1+p_2}{2}+\mathcal{O}\left ( \frac{N}{\gamma}+\sqrt{N\left (\frac{p_1+p_2}{2}+\frac{\kappa}{2\gamma}\right )}\right ).$$ 
It easily follows that the largest eigenvalue $\rho_1(A)$ of $A$ satisfies 
$$\rho_1\geq \lambda_1 \Big(1+\mathcal{O}(\frac{1}{\gamma (p_1+p_2)}+\frac{1}{\sqrt{N}})\Big).$$ In addition decomposing a normalized eigenvector $v$ associated to $\rho_1$ as 
 $$v= r_1 v_1+r_2v_2+\sqrt{1-r^2}w$$ for some normalized vector $w$ orthogonal to $v_1$ and $v_2$ and where $r^2=r_1^2+r_2^2$, then one has that
 $$\langle Av, v\rangle = r_1^2 N\frac{p_1+p_2}{2} + \mathcal{O}( \sqrt N +\frac{N}{\gamma}) f(r) +N\frac{p_1-p_2}{2}r_2^2$$ 
 for some function $f(\dot)$ such that $\|f\|_\infty \leq 1$.
 Thus it follows that $r_1=1+\mathcal{O}(\frac{1}{\gamma}+N^{-\frac{1}{2}}).$ This finishes the proof that 
 the largest eigenvalue (and eigenvector) of $A$ and $P_0$ almost coincide. 
 Similarly, since $$\langle Av_2, v_2 \rangle =\lambda_2+\mathcal{O}\left ( \frac{N}{\gamma}+\sqrt N \right ),$$ the same arguments imply that 
 the second largest eigenvalue of $A$ and $P_0$ coincide provided $$N (p_1-p_2)\gg  \sqrt N + \frac{N}{\gamma}.$$
 And associated normalized eigenvectors  coincide asymptotically, following the same basic perturbation argument.

\subsection{Proof of Lemma \ref{lemme:r_1}}
We first prove the first point.  The objectif is to lower-bound  $\langle v_1, w_1\rangle.$ Since  $w_1$ has non negative coordinates and is normed to $1$,  $\sum_i w_1(i) |w_1|_{\infty} \geq 1=|w_1|_2^2.$ Thus we immediately get the first lower bound $$\langle v_1, w_1\rangle =\frac{1}{\sqrt N} \sum_{i=1}^N w_1(i)\geq \frac{1}{\sqrt N |w|_{\infty}}.$$
Let  $i_o$ be a coordinate such that $w_1(i_0)=|w|_{\infty}.$ Then one has that 
$$\mu_1w_{i_0}=\sum_{j=1}^N P_{i_0j}w_j=\sum_{j=1}^N P_{i_0j}w_{i_0}+\sum_{j=1}^N P_{i_0j}(w_j-w_{i_0}).$$

Fix $\eta>0, \epsilon >0$ that we allow further to depend on $N$ and such that $\eta \gg \epsilon$.
Using that 
$\mu_1\geq d_{\max}(1-\epsilon)$ (see Proposition \ref{Prop1}), 
we thus obtain that
\be \sum_{j=1}^N P_{i_0j}(w_{i_0}-w_j)\leq \epsilon  d_{\max}w_{i_0},\label{boundB}
\ee 
where $d_{\max} = \max_{i} \sum_{j=1}^N P_{i,j} \simeq c_0\frac{N}{\gamma}$.  Define now
  $$B:=\{j, P_{i_0j}>\eta \mathrm{\ and\ }w_j<\frac{w_{i_0}}{2}\}$$
 and 
 $$\overline{B}:=\{j, P_{i_0j}>\eta \mathrm{\ and\ }w_\geq \frac{w_{i_0}}{2}\}$$
 
 Using (\ref{boundB}), one obtains that $\eta \sharp B w_{i_0}/2\leq \epsilon w_{i_0}d_{\max}.$
 This means that \be \label{sharpB}\sharp B \leq \frac{2\epsilon}{\eta} d_{\max}.\ee
 
 We can also deduce from the fact $\mu_1\geq d_{\max}(1-\epsilon)$ that $$\sum_{j=1}^N P_{i_0j}\geq d_{\max}(1-\varepsilon).$$
 Let us assume for the moment  that $$\sum_{j: P_{i_0j}\geq \eta}P_{i_0j}\geq cd_{\max}$$ for some constant $c$. Then by (\ref{sharpB}) this implies 
 $$
 \sharp \overline{B} \geq cd_{\max}- \sharp B  \geq d_{\max} (c - \frac{2\varepsilon}{\eta})\geq Cd_{\max}
 $$
 for some constant $C>0$. Using the fact that $\|w\|=1$, this implies that  $d_{\max}Cw_{i_0}^2/4 \leq 1$ which in turn yields that 
 $$|w|_{\infty}\leq \frac{C'}{\sqrt{d_{\max}} },$$ and then Lemma \ref{lemme:r_1} will be proved.
 
Therefore, it remains to prove that  $$\sum_{j: P_{i_0j}\geq \eta}P_{i_0j}\geq cd_{\max}.$$ This is true if $i_0$ is such that $|X_{i_0}|^2 \leq \frac{\ln \gamma}{\gamma}$, by slightly adapting the proof of Lemma \ref{lem: 1} and choosing $\eta$ of the order of $\min\{ \sqrt{\varepsilon}, 1/\gamma\}$ -- more precisely, the only change in the proof of Lemma \ref{lem: 1},  is the control of $S_1$.

   
   One can easily extend this claim if $|X_{i_0}|^2 \leq \frac{K\ln \gamma}{\gamma}$ for some constant $K$ large enough.
 Now noting $\sum_j P_{ij}^*$ the subsum over those indices $j$ such that $P_{ij}\leq \eta$, one has that
 \begin{align*}
 &\mathbb{P}\left ( \exists i, |X_i|^2\geq \frac{K\ln \gamma}{\gamma}\: \sum_j^*P_{ij}\geq d_{\max}(1-2\epsilon) \right)\cr
 &\leq \mathbb{P}\left ( \exists \:\frac{CN}{\gamma}) \mathrm{\ points\ }X_j \mathrm{\ in\ a\ ball\ } B(x,r), |x|\geq \frac{(K-1)\ln \gamma}{\gamma}, \: r\leq 2\frac{\ln \gamma}{\gamma}\right) . \cr
 &\leq C'' \binom{N}{\frac{N}{\gamma}}e^{-C' N (K-2)\ln \gamma},
 \end{align*}
where $C, C', C''$ are constants and the last follows from Gaussian integration on squares of size $2\frac{\ln \gamma}{\gamma}$ covering $B(0, (K-3)\frac{\ln \gamma}{\gamma})^c$. Choosing $K $ large enough (actually $K=4$ should be enough) yields the result and finishes the proof of the first part Lemma \ref{lemme:r_1}.

\medskip

We now consider the second, more technical point. Let us consider a subset of indices
$I \subset \{1,\ldots,N\}$ to be fixed later and $w_I=\frac{1}{\sqrt{\sharp I}}(w_I(1), \ldots, w_I(N))^t,$ where $w_I(i)=\mathbbm{1}_{i \in I}.$

Then one has 
$$\langle w_I, v_1\rangle = \sqrt{\frac{\sharp I}{N}}\ \mathrm{\ and\ }\ \langle P_1 w_I, w_I\rangle = \frac{1}{I}\sum_{i,j \in I}P_{ij}=:D_I,
$$
where $D_I$ denotes the average inner connectivity (restricted to edges between two vertices from $I$) and it also holds that$\langle P_1 v_1, v_1\rangle=\overline{d}$ where  $\overline{d}$ is the average global connectivity.
We now show that  we can exhibit such a set $I$ such that $\sharp I \geq  \gamma$ and $D_I = \mu_1(1+\smallO(1))$, since we assumed $\frac{N}{\gamma} \sim Np$. Fix $A>0$.
Set  $$I:=\{1\leq i\leq N, \: \|X_i\|^2 \leq A\frac{\gamma }{N}\}.$$ Since $\gamma \ln \gamma/N$ tends to 0, the arguments of the proof of Lemma \ref{lem: 1} can be easily adapted to prove that $\sharp I \geq \gamma A$ with arbitrarily high probability as long as $A\ll \ln \gamma$. Moreover, adapting again the proof of Lemma \ref{lem: 1} (controlling the sum $S_1$ defined there in a similar fashion since we can still approximate  $\mathbf{n_k^{(i)}}$ by $\frac{N\varepsilon}{2\gamma}$ as  $e^{-\frac{\|X_i\|^2}{2}}$ goes to 1), we obtain that $D_I = \mu_1(1+\smallO(1))$. We can do the same to define a vector supported on $\frac{N}{\gamma}$ coordinates instead of $\gamma$.

Consider now the largest entry of $w_1$: let $i$ be such that $w_i=|w_1|_{\infty}.$ Let $\epsilon$ be fixed small so that $\mu_1\geq \frac{N}{2\gamma}(1-\epsilon)$. Let $J$ be the subset 
$$J=\{j, w_1(j)\geq (1-3\epsilon) w_i\}.$$ Then, one has that $\sum_{j\in J}P_{ij}+(1-3\epsilon) (\sum_j P_{ij}-\sum_{j\in J}P_{ij})\geq \frac{N}{2\gamma}(1-\epsilon)$
from which one deduces that $\sum_{j\in J}P_{ij}\geq \frac{2}{3}\frac{N}{2\gamma}.$ In particular this implies that $w_1$ cannot be localized on less than $\frac{N}{\gamma}$ coordinates (and is roughly equally spread on these coordinates). One can also show that the second block of largest entries of $w_1$ has size at least of order 
$\frac{N}{\gamma}$ and entries greater than $|w_1|_{\infty}(1-3\epsilon)^2.$
 Assume $w_1$ is localized on less than $\gamma$ coordinates so that 
$\langle w_1, w_I \rangle \to 0$.\\ 
In the same way we constructed $I$, one can construct at least $\gamma^2/N$  vectors $\hat{v_i}$ whose support are of size $A\frac{N}{\gamma}$ $A>0$  chosen large enough, 2 by 2 disjoint such that 
$$\langle  \hat{v_i}, P  \hat{v_i}\rangle \geq \frac{N}{2\gamma}(1-\epsilon).$$
Let now $\tilde w_1$ be the vector whose coordinates are those of $w_1$ greater than $\eta |w_1|_{\infty}$, with $\eta>0$  chosen small. Because $w_1$ is localized on less than $\gamma$ coordinates, the number of non zero coordinates of $\tilde w_1$ can be written $k \frac{N}{\gamma}$ for some $k\ll \frac{\gamma^2}{N}.$
Let $\epsilon$ be such that $1-3\epsilon=\eta$, so that there must exist  an index $\mathbf{i}\in J$ such that for some $\delta>0$, $$\sum_{j \notin J}P_{ij}\geq \delta \frac{N}{\gamma}.$$
This follows from the fact that $J$ corresponds to a subset of indices of the smallest of the $X_{i}$'s and the nearest neighbors cannot be all in $J$. 
Furthermore, for the same reason there exist at least $\delta' \frac{N}{\gamma}$ such indices $\mathbf{i}$. 
Indeed define for any vertex $j\in J$: $$S_1(j)=\sum_{k \in J}P_{jk}; S_2(j)=\sum_{l \in J^c}P_{jl}.$$ One then has that 
$$\frac{\mu_1}{S_1(j)+S_2(j)}\to 1, \forall j \in J.$$ In all cases one has that $$\frac{\mu_1}{S_1(j)+S_2(j)}\geq 1-\epsilon.$$
Fix $\delta >0$ small. And set $E_{\delta}=\{j \in J, \frac{S_1(j)}{S_1(j)+S_2(j)}\in [\delta, 1-\delta]\}.$ We call $E_{\delta}$ the boundary of $J$. 
For any $i=1, \ldots, k N \gamma^{-1}$ (corresponding to the non zero entries of $\tilde w_1$), consider the ball $B(X_i, \frac{1}{\gamma})$. It is colored green if $\frac{S_2(i)}{S_1(i)+S_2(i)}>1-\delta$. 
It is colored red $\frac{S_1(i)}{S_1(i)+S_2(i)}>1-\delta$. In all other cases, such a ball is colored blue\footnote{Of course, this choice of colours is completely arbitrary and only for illustration purpose} . One can note that the boundary corresponds to blue balls.
We claim that there exists $\delta >0$ small such that the edge $E_{\delta}$ is non empty and furthermore encircles an area in the order of $k\frac{N}{\gamma}.$
 
 To prove this fact, one first remarks that there are green balls. This follows from the fact that we assume the size of the support of $w_1$ is negligible with respect to 
 $\gamma.$ There also exists at least one red ball. Indeed, consider the ball centered at $X_i$ where $w_i=|w_1|_{\infty}.$ One then has that 
 $$\frac{\mu_1}{S_1(i)+S_2(i)}=\frac{S_1(i)}{S_1(i)+S_2(i)} a_1+\frac{S_2(i)}{S_1(i)+S_2(i)}a_2,$$
 where $a_1S_1=\sum_{k \in J}P_{ik}\frac{w_k}{w_i}, \:  a_2S_2=\sum_{l \in J^c}P_{il}\frac{w_l}{w_i}.$
 One deduces that 
 $$\frac{S_1(i)}{S_1(i)+S_2(i)}\geq\frac{ \frac{\mu_1}{S_1(i)+S_2(i)}-\eta}{a_1-a_2},$$
 where $\frac{\mu_1}{S_1(i)+S_2(i)} \leq a_1\leq 1.$ From this one deduces that 
 \[ \frac{S_1(i)}{S_1(i)+S_2(i)}\geq 1-\frac{\epsilon}{1-\eta}.\]
 Choosing $\eta >0$ small enough ($\eta<1/2$) yields that  \[ \frac{S_1(i)}{S_1(i)+S_2(i)}\geq 1-2\epsilon \geq 1-\delta\]
 provided $\delta \geq 2\epsilon.$
Consider two balls intersecting on more than one third of the total area of one ball. This is the case if the center of the second ball is contained in the first one. They cannot be colored green and red provided $2\delta<1/3.$ From this fact we deduce that there necessarily exists an interface of blue balls surrounding the red balls. 
Now $J$ consists of indices corresponding to those in the area encircled by the blue interface (up to an error in the proportion of $\delta$) and some more points
which are necessarily included in red balls centered at some point $X_j, j \in J$. Note that the proportion of those points in $J$ and such red balls cannot exceed $\delta$. The minimal area $A$ to contain $kN\gamma^{-1}$ points is in the order of $A\geq C k\gamma^{-1}$ for some constant $C$.
 Now the total area covered by red balls with some inside points in $J$ defines a domain $D$ whose area is at most in the order of  $ \frac{k}{\gamma}$. Among these a proportion of at most $2\delta$ corresponds to points in $J$. From this we deduce that 
the area encircled by blue balls is at least $c A$ for some constant $c<1.$ 
 Thus one can find at least $K=(k\gamma)^{1/2} $ blue disks whose support are pairwise disjoint and on the frontier of the domain.

As a consequence there exists at least one normalized vector $\hat{v_i}$ such that the supports of $\hat{v_i}$ and $\tilde w_1$ are disjoint. Calling $I_2$ the support of $\hat{v_i}$ one has that there exists a constant $c>0$
\be R_{v_2}:= \sum_{i \in J, \:j \in I_2}  P_{ij}w_1(i)\frac{1}{\sqrt{\sharp I_2}}=\frac{\mu_1}{\sqrt{\sharp I_2}}\sum_{i \in I_2}w_1(i)\geq c \sqrt{\frac{N}{\gamma}}\eta |w_1|_{\infty}\mu_1 .\label{connex}\ee

Now we can construct at least $ K$ such vectors whose support are pairwise disjoint by considering the blue disks. 
 We denote these vectors $\mathbf{v_1}, \ldots, \mathbf{v_{K}}.$
Let then set $$v=\frac{\sum_{i=1}^{K} \mathbf{v_i}}{\sqrt K}.$$
Then because $\langle \mathbf{v_i}, P\mathbf{v_i}\rangle \geq \frac{N}{2\gamma}(1-\epsilon),$ and (\ref{connex}) one can check that 
$$\sup_r \langle rw_1+\sqrt{1-r^2}v, P \left (rw_1+\sqrt{1-r2}v\right )\rangle $$
is achieved for $r_0<1$ such that $$\frac{r_0}{\sqrt{1-r_0^2}}\geq\frac{\mu_1-\frac{N(1-\epsilon)}{2\gamma }}{ \sqrt K c \sqrt{\frac{N}{\gamma}}\eta |w_1|_{\infty}\mu_1}.$$ The denominator is much larger than $\mu_1$ as one can check that $\sqrt K \sqrt{\frac{N}{\gamma}} | w_1|_{\infty}$ does not tend to $0$.
And furthermore this maximum can excede $\mu_1$: this is a contradiction. 

\subsection{Proof of Theorem \ref{Th:mainSBM}} Let us denote by  $\theta_1$ and $\theta_2$ the two eigenvalues that exit the support of the spectral measure of $P_1$.
Now assuming this holds true, an eigenvector associated to such an eigenvalue $\theta$ has necessarily the form:
$$w= R_1 (\theta) (\alpha_1 v_1+\alpha_2 v_2),$$ where $$\alpha_1v_1+\alpha_2v_2 \in \mathrm{Ker}(I +P_0 R_1).$$
Hereabove and in the sequel we denote $R_1$ for $R_1(\theta)$ for the sake of notations.
Using this one deduces that 
\begin{align*} &\alpha_1= -\frac{\lambda_1\langle v_1, R_1 v_2\rangle}{\lambda_1\langle v_1, R_1 v_1 \rangle +1}\alpha_2\cr
\mathrm{\ and\ } & \lambda_1 \lambda_2 \langle v_1, R_1 v_2\rangle^2=(1+\lambda_1\langle v_1, R_1 v_1\rangle)(1+\lambda_2 \langle v_2, R_1 v_2\rangle).
\end{align*}


Then for such an eigenvector setting $a_i= \langle v_i, R_1 v_i\rangle,$ for $i=1, 2$ and $b=\langle v_1, R_1 v_2\rangle$ we obtain that
\be \label{first}\langle w, v_2\rangle^2 = \frac{\alpha_2^2}{\lambda_2^2}; \langle w, v_1\rangle = \frac{b \alpha_2}{1+\lambda_1a_1}.\ee
So far we have not normalized the eigenvector $w$: this has to be considered in order to show that there is indeed some information on $v_2$ using the two normalized eigenvectors. 
Let us  now recall the equation to compute the two eigenvalues $\theta_i$: 
\be \label{impl}f_{\lambda_1, \lambda_2}( \theta)= (1+\lambda_1 a_1(\theta))(1+\lambda_2 a_2(\theta))-\lambda_1 \lambda_2 b^2(\theta)=0,\ee
which we have solved as $\theta$ being a function of $\lambda_1$ and $\lambda_2.$ The very definition of $w$ yields that 
$$||w||^2=\alpha_2^2 \left (\frac{\lambda_1^2 b^2}{(\lambda_1a_1+1)^2}a_1'(\theta) +a_2'(\theta)-2\frac{\lambda_1 b}{\lambda_1 a_1 +1}b'(\theta)\right ) .$$
Using (\ref{impl}) we obtain that 
\be ||w||^2=\alpha_2^2 \frac{\frac{\partial f_{\lambda_1, \lambda_2}}{\partial \theta}}{\lambda_2 (1+\lambda_1a_1)}=\alpha_2^2 \frac{\frac{\partial f_{\lambda_1, \lambda_2}}{\partial \theta}}{\frac{\lambda_1 \lambda_2 b^2}{a_2^2}-(1+\lambda_1 a_1)}, \label{last}\ee
and combining (\ref{first}) and (\ref{last}) gives
\be\label{EQ:correlation} \frac{\langle w, v_2\rangle^2}{||w||^2}= \frac{1}{\frac{\partial f_{\lambda_1, \lambda_2}}{\partial \theta}} \frac{1+\lambda_1a_1}{\lambda_2}.\ee
Notice that  Equation \eqref{EQ:correlation} implies that there are at most two eigenvalues of $P_0+P_1$ that separate from the spectrum of $P_1$; denote them by $\theta_1$ and $\theta_2$. We also recall that we have denoted by $\theta(\lambda_1)$ and $\theta(\lambda_2)$ the respective solutions of $1+\lambda_1a_1=0$ and $1+\lambda_2a_2=0$. We claim that those four specific values satisfy the following relations
$$\begin{array}{l}\theta_2 \leq \min\{ \theta(\lambda_2), \theta(\lambda_1)\}\\ \theta_1 \geq \max\{ \theta(\lambda_2), \theta(\lambda_1)\}\end{array}
, \quad \theta(\lambda_2)\leq \lambda_2+\mu_1 \quad \mathrm{\ and\ } \lambda_1 \leq \theta(\lambda_1)\leq \lambda_1 +\mu_1\ .  
$$The inequalities on the left are a consequence of the fact that $\theta_1$ and $\theta_2$ are solutions of $f_{\lambda_1, \lambda_2}( \theta)=0$ thus $(1+\lambda_1 a_1(\theta_i))$ and $(1+\lambda_2 a_2(\theta_i))$ must have the same sign, the one of $\frac{\partial f_{\lambda_1, \lambda_2}}{\partial \theta}(\theta_i)$. The second inequality is a consequence of the fact that $|\mu_j|\leq \mu_1$ and then plugging this value in $a_2$. The inequalities on the right are a consequence of the very last argument and of the  fact that $\theta(\lambda_1) \geq \lambda_1$ since $\theta(\lambda_1)$ is an eigenvalue of $P_0+\lambda_1v_1v_1^\top$.

This immediately gives the first bound
\begin{equation}\label{EQ:numerator}
-(1+\lambda_1a_1(\theta_2))  = \lambda_1 \sum_j \frac{r_j^2}{\theta_2-\mu_j}-1 \geq \frac{\lambda_1}{\lambda_2+2\mu_1}-1
\end{equation}

As a consequence, it remains to control $\frac{\partial f_{\lambda_1, \lambda_2}}{\partial \theta}(\theta_2)$. Notice that, by definition of $ f_{\lambda_1, \lambda_2}$ and the fact that $ f_{\lambda_1, \lambda_2}(\theta_2)=0$, we get
\begin{align*}
\left|\frac{\partial f_{\lambda_1, \lambda_2}}{\partial \theta}(\theta_2)\right| \leq \lambda_1\frac{\partial a_1}{\partial \theta}(\theta_2)\big(\lambda_2|a_2|-1\big)+\lambda_2\frac{\partial a_2}{\partial \theta}(\theta_2)\big(\lambda_1|a_1|-1\big)\\\hspace{2cm}+2\frac{\partial b}{\partial \theta}(\theta_2)
\sqrt{\lambda_1\lambda_2}\sqrt{(1+\lambda_1a_1)(1+\lambda_2a_2)}
\end{align*}
Moreover, we immediately get the following upper-bounds
$$
|a_i(\theta)|=\sum_j \frac{r_j^2}{\theta-\mu_j} \leq \frac{1}{\theta-\mu_1}, \quad |a_2(\theta)| \leq \frac{1}{\theta-\mu_1}, \quad a'_1,a'_2,b'\leq \frac{1}{(\theta-\mu_1)^2}\ .
$$
Plugging those estimates in $\frac{\partial f_{\lambda_1, \lambda_2}}{\partial \theta}(\theta_2)$ gives that
\begin{align}\label{EQ:denominator}
\nonumber \lambda_2\left|\frac{\partial f_{\lambda_1, \lambda_2}}{\partial \theta}\right| \leq \frac{\lambda_1\lambda_2}{(\theta_2-\mu_1)^2}\big(\frac{\lambda_2}{\theta_2-\mu_1}-1\big)+\frac{\lambda_2^2}{(\theta_2-\mu_1)^2}\big(\frac{\lambda_1}{\theta_2-\mu_1}-1\big)\\\hspace{2cm}+2\frac{\sqrt{\lambda_1\lambda_2}\lambda_2}{(\theta_2-\mu_1)^2}\sqrt{\big (\frac{\lambda_2}{\theta_2-\mu_1}-1\big  )\big  (\frac{\lambda_1}{\theta_2-\mu_1}-1\big )}
\end{align}
From Equation \eqref{EQ:separation}, we get that $\theta_2 \geq \frac{\lambda_2}{4}\geq \mu_1(1+\varepsilon)$ so that we get  non-zero correlation between $w_2$ and $v_2$ from Equations \eqref{EQ:numerator} and \eqref{EQ:denominator}.

\medskip

We can actually be more precise. It is indeed quite easy to prove using \eqref{functg} that 
$$
f_{\lambda_1,\lambda_2}(\theta) \geq 1+\frac{\lambda_1}{\mu_1 -\theta}+\frac{\lambda_2}{\mu_1 -\theta}+\frac{\lambda_1\lambda_2}{(\mu_1 + \theta)^2}.
$$

Let us assume that the ratios $\frac{\lambda_1}{\lambda_2}=q> 1$ and $0\leq \frac{\mu_1}{\lambda_2}=x \leq 1$ are fixed, and make the change of variables
 $\theta = \lambda_2 - \gamma \mu_1=(1-\gamma x)\lambda_2$, so that $$
f_{\lambda_1,\lambda_2}(\theta) \geq 1-\frac{1+q}{1-(\gamma+1)x}+\frac{q}{(1-(\gamma-1)x)^2}.
$$
In order to control the solution of $f_{\lambda_1,\lambda_2}=0$ w.r.t.\ $\gamma$, we are  going to assume for the moment that $(\gamma+1)x \leq \frac{1}{2}$ so that the r.h.s.\ can be easily lower-bounded into 
\begin{align*}
f_{\lambda_1,\lambda_2}(\theta) & \geq 1-(1+q)\big(1+(\gamma+1)x+2((\gamma+1)^2x^2)\big)\\ & \hspace{1cm} +q\big(1+2(\gamma-1)x-(\gamma-1)^2x^2\big)\\ 
&= x\Big( \big[\gamma(q-1)-(3q+1)\big]-2x\big[(3q+1)\gamma^2-2(q-1)\gamma +(3q+1)\big]\Big),
\end{align*}
which gives an explicit (and uniformly bounded) upper-bound  $\overline{\gamma}$ for $\gamma$, i.e., the solution of the above degree 2 polynomial. Notice that when $x$ goes to zero, the expression boils down to
$$\overline{\gamma} =  3 +\frac{4}{q-1}+\mathcal{O}(x).$$
Plugging $\overline{\gamma}$   into Equations \eqref{EQ:numerator} and \eqref{EQ:denominator} gives that 
$$\frac{|\langle w, v_2\rangle|^2}{\|w\|^2} \geq \Big(1-\frac{2x}{q-1}\Big)\frac{(1-(\overline{\gamma}+1)x)^3}{\Big(1+ \frac{\overline{\gamma}+1}{2(q-1)}x +\sqrt{q(\overline{\gamma}+1)x}\Big)^2}\ 
$$which is uniformly bounded away from 0. 

Moreover, when $x$ goes to 0, it holds that 
\begin{align*}
\frac{|\langle w, v_2\rangle|}{\|w\|} &\geq 1 - 2\frac{q}{\sqrt{q-1}}\sqrt{x} - \mathcal{O}(x) 
\\ &= 1-2\frac{\frac{\lambda_1}{\lambda_2}}{\sqrt{\frac{\lambda_1}{\lambda_2}}-1}\sqrt{\frac{\mu_1}{\lambda_2}}-\mathcal{O}\big (\frac{\mu_1}{\lambda_2}\big )
\end{align*}
and when $x$ is small enough\footnote{Numerical implementation suggests that those computations  hold for $x \leq \frac{q-1}{8q}$, i.e., when the value on $\overline{\gamma}$ is set to $3+\frac{4}{q-1}$ without the $\mathcal{O}(x)$ term.}, then we also have that $(\gamma+1)x \leq \frac{1}{2}$ as required.
This proves the theorem (since  ratios are assumed to be uniformly lower and upper-bounded).

\end{appendix}

\end{document}